\def\eqref#1{equation~\ref{#1}}
\def\1{\bm{1}}
\DeclareMathAlphabet{\mathsfit}{\encodingdefault}{\sfdefault}{m}{sl}
\SetMathAlphabet{\mathsfit}{bold}{\encodingdefault}{\sfdefault}{bx}{n}
\newtheorem{proposition}{Proposition}
\newtheorem{propositionb}{Proposition B.}
\newtheorem{definition}{Definition}
\newenvironment{proof}{{\bf Proof:}}{\hfill\rule{2mm}{2mm}}
\newcommand{\calX}{\mathcal{X}}
\newcommand{\calD}{\mathcal{D}}
\newcommand{\calL}{\mathcal{L}}
\newcommand{\tdrn}{\theta_{\mbox{\tiny drn}}}
\newcommand{\tfea}{\theta_{\mbox{\tiny feature}}}
\newcommand{\tdec}{\theta_{\mbox{\tiny decoder}}}
\newcommand{\teta}{\tilde{\eta}}
\newcommand{\spu}{\sigma^{\mbox{\tiny pure}}}
\newcommand{\uccae}{UCC}
\newcommand{\uccaetwo}{UCC^{2+}}
\newcommand{\ucc}{UCC_{\alpha=1}}
\newcommand{\ucctwo}{UCC_{\alpha=1}^{2+}}
\definecolor{codegreen}{rgb}{0,0.6,0}
\definecolor{codegray}{rgb}{0.5,0.5,0.5}
\definecolor{codepurple}{rgb}{0.58,0,0.82}
\definecolor{backcolour}{rgb}{0.95,0.95,0.92}
\lstdefinestyle{mystyle}{
    backgroundcolor=\color{backcolour},   
    commentstyle=\color{codegreen},
    keywordstyle=\color{magenta},
    numberstyle=\tiny\color{codegray},
    stringstyle=\color{codepurple},
    basicstyle=\footnotesize,
    breakatwhitespace=false,         
    breaklines=true,                 
    captionpos=b,                    
    keepspaces=true,                 
    numbers=left,                    
    numbersep=5pt,                  
    showspaces=false,                
    showstringspaces=false,
    showtabs=false,                  
    tabsize=2
}
\title{Weakly Supervised Clustering by Exploiting Unique Class Count}
\author{Mustafa Umit Oner$^{1,2}$, Hwee Kuan Lee$^{1,2,3,4}$ \& Wing-Kin Sung$^{1,5}$\\
${^1}$School of Computing, National University of Singapore, Singapore 117417, ${^2}$A*STAR \\Bioinformatics Institute, Singapore 138671, ${^3}$Image and Pervasive Access Lab (IPAL), \\CNRS UMI 2955, Singapore 138632, ${^4}$Singapore Eye Research Institute, Singapore \\169856, ${^5}$A*STAR Genome Institute of Singapore, Singapore 138672 \\
\texttt{\{umitoner,ksung\}@comp.nus.edu.sg}, \texttt{\{leehk\}@bii.a-star.edu.sg} \\
}
\begin{document}

\maketitle

\begin{abstract}
A weakly supervised learning based clustering framework is proposed in this paper. As the core of this framework, we introduce a novel multiple instance learning task based on a bag level label called unique class count ($ucc$), which is the number of unique classes among all instances inside the bag. In this task, no annotations on individual instances inside the bag are needed during training of the models. We mathematically prove that with a perfect $ucc$ classifier, perfect clustering of individual instances inside the bags is possible even when no annotations on individual instances are given during training. We have constructed a neural network based $ucc$ classifier and experimentally shown that the clustering performance of our framework with our weakly supervised $ucc$ classifier is comparable to that of fully supervised learning models where labels for all instances are known. Furthermore, we have tested the applicability of our framework to a real world task of semantic segmentation of breast cancer metastases in histological lymph node sections and shown that the performance of our weakly supervised framework is comparable to the performance of a fully supervised Unet model.
\end{abstract}

\section{Introduction}
In machine learning, there are two main learning tasks on two ends of scale bar: unsupervised learning and supervised learning. Generally, performance of supervised models is better than that of unsupervised models since the mapping between data and associated labels is provided explicitly in supervised learning. This performance advantage of supervised learning requires a lot of labelled data, which is expensive. Any other learning tasks reside in between these two tasks, so are their performances. Weakly supervised learning is an example of such tasks. There are three types of supervision in weakly supervised learning: \textit{incomplete}, \textit{inexact} and \textit{inaccurate} supervision. Multiple instance learning (MIL) is a special type of weakly supervised learning and a typical example of inexact supervision~\citep{zhou2017brief}. In MIL, data consists of bags of instances and their corresponding bag level labels. Although the labels are somehow related to instances inside the bags, the instances are not explicitly labeled. In traditional MIL, given the bags and corresponding bag level labels, task is to learn the mapping between bags and labels while the goal is to predict labels of unseen bags~\citep{dietterich1997solving,foulds2010review}.

In this paper, we explore the feasibility of finding out labels of individual instances inside the bags only given the bag level labels, i.e. there is no individual instance level labels. One important application of this task is semantic segmentation of breast cancer metastases in histological lymph node sections, which is a crucial step in staging of breast cancer~\citep{brierley2016tnm}. In this task, each pathology image of a lymph node section is a bag and each pixel inside that image is an instance. Then, given the bag level label that whether the image contains metastases or not, the task is to label each pixel as either metastases or normal. This task can be achieved by asking experts to exhaustively annotate each metastases region in each image. However, this exhaustive annotation process is tedious, time consuming and more importantly not a part of clinical workflow.

In many complex systems, such as in many types of cancers, measurements can only be obtained at coarse level (bag level), but information at fine level (individual instance level) is of paramount importance. To achieve this, we propose a weakly supervised learning based clustering framework. Given a dataset consisting of instances with unknown labels, our ultimate objective is to cluster the instances in this dataset. To achieve this objective, we introduce a novel MIL task based on a new kind of bag level label called unique class count ($ucc$), which is the number of unique classes or the number of clusters among all the instances inside the bag. We organize the dataset into non-empty bags, where each bag is a subset of individual instances from this dataset. Each bag is associated with a bag level $ucc$ label. Then, our MIL task is to learn mapping between the bags and their associated bag level $ucc$ labels and then to predict the $ucc$ labels of unseen bags. We mathematically show that a $ucc$ classifier trained on this task can be used to perform unsupervised clustering on individual instances in the dataset. Intuitively, for a $ucc$ classifier to count the number of unique classes in a bag, it has to first learn discriminant features for underlying classes. Then, it can group the features obtained from the bag and count the number of groups, so the number of unique classes.

Our weakly supervised clustering framework is illustrated in Figure~\ref{fig:ucc_block_diagram}. It consists of a neural network based $ucc$ classifier, which is called as \textit{Unique Class Count ($UCC$)} model, and an unsupervised clustering branch. The $UCC$ model accepts any bag of instances as input and uses $ucc$ labels for supervised training. Then, the trained $UCC$ model is used as a feature extractor and unsupervised clustering is performed on the extracted features of individual instances inside the bags in the clustering branch. One application of our framework is the semantic segmentation of breast cancer metastases in lymph node sections (see Figure~\ref{fig:segmentation_comparison}). The problem can be formulated as follows. The input is a set of images. Each image (bag) has a label of $ucc1$ (image is fully normal or fully metastases) or $ucc2$ (image is a mixture of normal and metastases). Our aim is to segment the pixels (instances) in the image into normal and metastases. A $UCC$ model can be trained to predict $ucc$ labels of individual images in a fully supervised manner; and the trained model can be used to extract features of pixels (intances) inside the images (bags). Then, semantic segmentation masks can be obtained by unsupervised clustering of the pixels (each is represented by the extracted features) into two clusters (metastases or normal). Note that $ucc$ does not directly provide an exact label for each individual instance. Therefore, our framework is a weakly supervised clustering framework.

\begin{figure}
	\begin{center}
		\includegraphics[width=1.\linewidth]{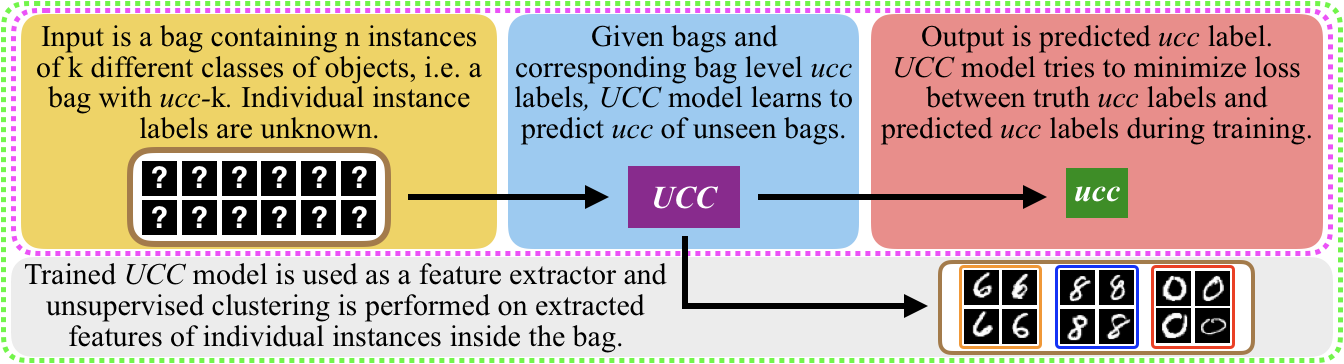}
	\end{center}
	\caption{Weakly supervised clustering framework. Our framework (green dashed line) consists of the $UCC$ model (magenta dashed line) and the unsupervised instance clustering branch.}
	\label{fig:ucc_block_diagram}
\end{figure}

Finally, we have constructed $ucc$ classifiers and experimentally shown that clustering performance of our framework with our $ucc$ classifiers is better than the performance of unsupervised models and comparable to performance of fully supervised learning models. We have also tested the performance of our model on the real world task of semantic segmentation of breast cancer metastases in lymph node sections. We have compared the performance of our model with the performance of popular medical image segmentation architecture of $Unet$~\citep{ronneberger2015u} and shown that our weakly supervised model approximates the performance of fully supervised $Unet$ model\footnote{\textbf{Code and trained models:} \url{\codelink}}.

Hence, there are three main contributions of this paper:
\begin{enumerate}
	\item We have defined \textit{unique class count} as a bag level label in MIL setup and mathematically proved that a perfect $ucc$ classifier, in principle, can be used to perfectly cluster the individual instances inside the bags.

	\item We have constructed a neural network based $ucc$ classifier by incorporating kernel density estimation (KDE)~\citep{parzen1962estimation} as a layer into our model architecture, which provided us with end-to-end training capability.

	\item We have experimentally shown that clustering performance of our framework is better than the performance of unsupervised models and comparable to performance of fully supervised learning models.
\end{enumerate}

The rest of the paper is organized such that related work is in Section~\ref{sec:related_work}, details of our weakly supervised clustering framework are in Section~\ref{sec:ucc_framework}, results of the experiments on MNIST, CIFAR10 and CIFAR100 datasets are in Section~\ref{sec:experiments}, results of the experiments in semantic segmentation of breast cancer metastases are in Section~\ref{sec:application}, and Section~\ref{sec:conclusion} concludes the paper.

\section{Related Work} \label{sec:related_work}
This work is partly related to MIL which was first introduced in~\citep{dietterich1997solving} for drug activity prediction. Different types of MIL were derived with different assumptions~\citep{gartner2002multi, zhang2002dd, chen2006miles, foulds2008learning, zhang2009multi, zhou2009multi}, which are reviewed in detail in~\citep{foulds2010review}, and they were used for many different applications such as, image annotation/categorization/retrieval~\citep{chen2004image,zhang2002content,tang2010image}, text categorization~\citep{andrews2003support,settles2008multiple}, spam detection~\citep{jorgensen2008multiple}, medical diagnosis~\citep{dundar2007multiple}, face/object detection~\citep{zhang2006multiple, felzenszwalb2010object} and object tracking~\citep{babenko2011robust}. 

In MIL, different types of pooling layers are used to combine extracted features of instances inside the bags, such as max-pooling and log-sum-exp pooling~\citep{ramon2000multi,zhou2002neural,wu2015deep,wang2018revisiting}. On the other hand, our $UCC$ model uses KDE layer in order to estimate the distribution of extracted features. The advantage of KDE over pooling layers is that it embeds the instance level features into distribution space rather than summarizing them.

There are also methods modeling cardinality and set distributions~\citep{liu2015cardinality, brukhim2018predict, kipf2018learned}. However, cardinality of a set and $ucc$ are completely different from each other. It is also important to state that $ucc$ is obviously different from object/crowd counting~\citep{idrees2013multi, arteta2014interactive, zhang2015cross, zhang2016single} since the task in object/crowd counting is to count the instances of the same type of object or people.

Lastly, we compare clustering accuracies of our models with clustering accuracies of unsupervised baseline models: K-means~\citep{wang2015optimized} and Spectral Clustering~\citep{zelnik2005self}; state of the art unsupervised models: JULE~\citep{yang2016joint}, GMVAE~\citep{dilokthanakul2016deep}, DAC~\citep{chang2017deep}, DEPICT~\citep{ghasedi2017deep} and DEC~\citep{xie2016unsupervised}; and state of the art semi-supervised models: AAE~\citep{makhzani2015adversarial}, CatGAN~\citep{springenberg2015unsupervised}, LN~\citep{rasmus2015semi} and ADGM~\citep{maaloe2016auxiliary}.

\section{Weakly Supervised Clustering Framework} \label{sec:ucc_framework}
In this section, we state our machine learning objective and formally define our novel MIL task, which is the core of our weakly supervised clustering framework. Finally, we explain details of the two main components of our framework, namely $UCC$ model and unsupervised clustering branch.

\textbf{Objective:} Let $\calX =\{x_1,x_2,\cdots,x_n\}$ be a dataset such that each instance $x_i~\in~\calX$ belongs to a class, but its label is unknown. In this paper, we assume that total number of classes $K$ is known. Hence, each instance $x_i$ is endowed with an underlying, but unkown, label $\calL(x_i)=l_i \in \{1,2,\cdots,K\}$. Further assume that for each class $k \in \{1,2,\cdots K\}$, there exist at least one element $x_i \in \calX$ such that $\calL(x_i) = l_i = k$. Our eventual objective is to derive a predicted class label $\hat{l}_i$ for each instance $x_i$ that tends towards underlying truth class $l_i$, i.e. $\hat{l}_i \rightarrow \calL(x_i)=l_i$.

\subsection{A Novel MIL Task} \label{subsec:novel_mil}
In this novel MIL task, \textit{unique class count} is used as an inexact, weak, bag level label and is defined in Definition~\ref{def:ucc}. Assume that we are given subsets $\sigma_\zeta \subset \calX$, $\zeta=1,2,\cdots, N$ and unique class counts $\eta_{\sigma_\zeta} \forall{\sigma_\zeta}$. Hence, MIL dataset is $\calD = \{ (\sigma_1,\eta_{\sigma_1}), \cdots, (\sigma_N,\eta_{\sigma_N})\}$. Then, our MIL task is to learn the mapping between the bags and their associated bag level $ucc$ labels while the goal is to predict the $ucc$ labels of unseen bags.

\begin{definition}
	\label{def:ucc}
	Given a subset $\sigma_\zeta \subset \calX$, unique class count, $\eta_{\sigma_\zeta}$, is defined as the number of unique classes that all instances in the subset $\sigma_\zeta$ belong to, i.e. $\eta_{\sigma_\zeta} = |\{\calL(x_i)|x_i \in \sigma_\zeta\}|$. Recall that each instance belongs to an underlying unknown class.
\end{definition}

Given a dataset $\calD$, our eventual objective is to assign a label to each instance $x_i \in \calX$ such that assigned labels and underlying unknown classes are consistent. To achieve this eventual objective, a deep learning model is designed such that the following intermediate objectives can be achieved while it is being trained on our MIL task:

\begin{enumerate}
	\item {\bf Unique class count}: Given an unseen set $\sigma_\zeta$, the deep learning model, which is trained on $\calD$, can predict its unique class count $\eta_{\sigma_\zeta}$ correctly.
	\item {\bf Labels on sets}: Let $\sigma^{pure}_\zeta$ and $\sigma^{pure}_\xi$ be two disjoint pure sets (Definition~\ref{def:pure_set}) such that while all instances in $\sigma^{pure}_\zeta$ belong to one underlying class, all instances in $\sigma^{pure}_\xi$ belong to another class. Given $\sigma^{pure}_\zeta$ and $\sigma^{pure}_\xi$, the deep learning model should enable us to develop an unsupervised learning model to label instances in $\sigma^{pure}_\zeta$ and $\sigma^{pure}_\xi$ as belonging to different classes. Note that the underlying classes for instances in the sets are unknown.
	\item {\bf Labels on instances}: Given individual instances $x_i \in \calX$, the deep learning model should enable us to assign a label to each individual instance $x_i$ such that all instances with different/same underlying unknown classes are assigned different/same labels. This is the eventual unsupervised learning objective.
\end{enumerate}

\begin{definition}
	\label{def:pure_set}
	A set $\sigma$ is called a pure set if its unique class count equals one. All pure sets is denoted by the symbol $\spu$ in this paper.
\end{definition}

\subsection{Unique Class Count Model}
In order to achieve the stated objectives, we have designed a deep learning based \textit{Unique Class Count ($UCC$)} model. Our $UCC$ model consists of three neural network modules ($\tfea$, $\tdrn$, $\tdec$) and can be trained end-to-end. The first module $\tfea$ extracts features from individual instances; then distributions of features are constructed from extracted features. The second module $\tdrn$ is used to predict $ucc$ label from these distributions. The last module $\tdec$ is used to construct an autoencoder together with $\tfea$ so as to improve the extracted features by ensuring that extracted features contain semantic information for reconstruction.

Formally, for $x_i \in \sigma_\zeta, i=\{1,2,\cdots,|\sigma_\zeta|\}$, feature extractor module $\tfea$ extracts $J$ features $\{f^{1,i}_{\sigma_\zeta},f^{2,i}_{\sigma_\zeta},\cdots, f^{J,i}_{\sigma_\zeta}\} = \tfea(x_i)$ for each instance $x_i\in \sigma_\zeta$. As a short hand, we write the operator $\tfea$ as operating element wise on the set to generate a feature matrix $\tfea(\sigma_\zeta) = f_{\sigma_\zeta}$ with matrix elements $f^{j,i}_{\sigma_\zeta} \in \mathbb{R}$, representing the $j^{th}$ feature of the $i^{th}$ instance. After obtaining features for all instances in $\sigma_\zeta$, a kernel density estimation (KDE) module is used to accumulate feature distributions $h_{\sigma_\zeta} = (h^1_{\sigma_\zeta}(v), h^2_{\sigma_\zeta}(v),\cdots, h^J_{\sigma_\zeta}(v)) $. Then, $h_{\sigma_\zeta}$ is used as input to distribution regression module $\tdrn$ to predict the $ucc$ label, $\teta_{\sigma_\zeta} = \tdrn(h_{\sigma_\zeta})$ as a softmax vector $(\teta_{\sigma_\zeta}^1, \teta_{\sigma_\zeta}^2, \cdots, \teta_{\sigma_\zeta}^K)$. Concurrently, decoder module $\tdec$ in autoencoder branch is used to reconstruct the input images from the extracted features in an unsupervised fashion, $\tilde{x}_i=\tdec(\tfea(x_i))$. Hence, $UCC$ model, main modules of which are illustrated in Figure~\ref{fig:ucc_model}(a), optimizes two losses concurrently: `ucc loss' and `autoencoder loss'. While `ucc loss' is cross-entropy loss, `autoencoder loss' is mean square error loss. Loss for one bag is given in Equation~\ref{eq:loss_fnc}.

\begin{equation}
	\label{eq:loss_fnc}
	\alpha \underbrace{\left[\sum_{k=1}^{K}{ \eta_{\sigma_{\zeta}}^{k} \log{ \teta_{\sigma_{\zeta}}^{k} } } \right]}_{ucc\ loss}\ +\ (1-\alpha)\underbrace{\left[\frac{1}{|\sigma_\zeta|}\sum_{i=1}^{|\sigma_\zeta|}{(x_i - \tilde{x}_i)^2} \right]}_{autoencoder\ loss} \text{ where } \alpha \in [0,1]
\end{equation}

\begin{figure}
	\begin{center}
		\includegraphics[width=0.9\linewidth]{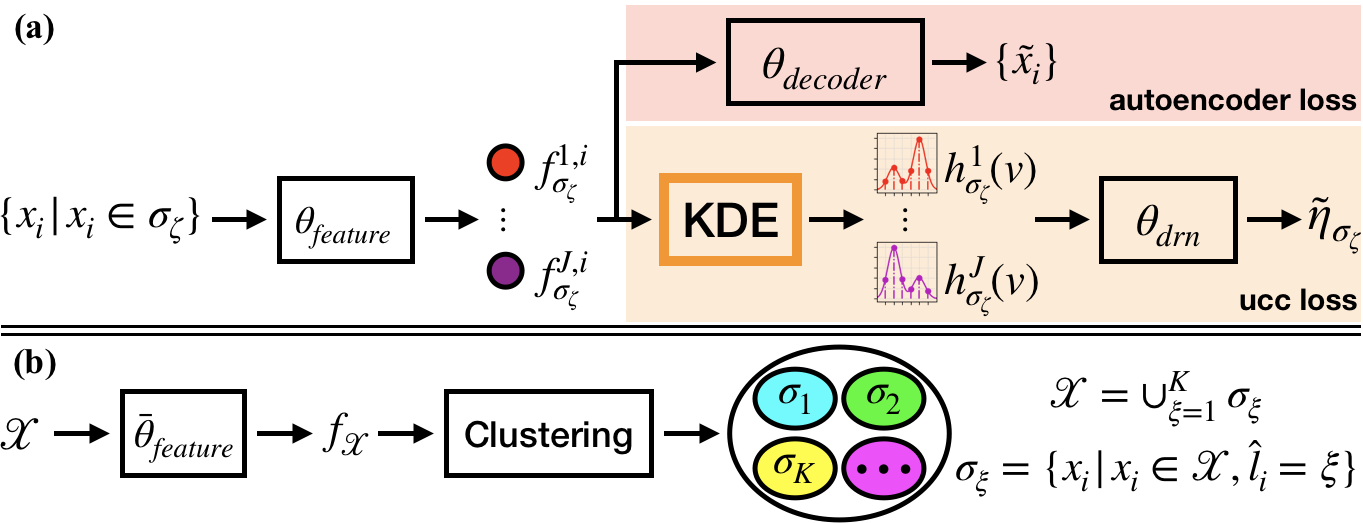}
	\end{center}
	\caption{Weakly supervised clustering framework. \textbf{(a) $\pmb{UCC}$ model:} $\tfea$ extracts $J$ features, shown in colored nodes. KDE module obtains feature distribution for each feature. Then, $\tdrn$ predicts the $ucc$ label $\teta_{\sigma_\zeta}$. Concurrently, decoder module $\tdec$ in autoencoder branch reconstructs the input images from the extracted features. \textbf{(b) Unsupervised clustering:} Trained feature extractor, ${\bar{\theta}}_{\mbox{\tiny feature}}$, is used to extract the features of all instances in $\calX$ and unsupervised clustering is performed on extracted features. Note that ${\hat{l}}_i$ is clustering label of $x_i \in \calX$.}
	\label{fig:ucc_model}
\end{figure}

\subsubsection{Kernel Density Estimation Module}
In $UCC$ model, input is a set $\sigma_\zeta$ and output is corresponding $ucc$ label $\teta_{\sigma_\zeta}$, which does not depend on permutation of the instances in $\sigma_\zeta$. KDE module provides $UCC$ model with permutation-invariant property. Moreover, KDE module uses the Gaussian kernel and it is differentiable, so our model can be trained end-to-end (Appendix~\ref{app_sec:kde}). KDE module also enables our theoretical analysis thanks to its decomposability property (Appendix~\ref{app_sec:proofs}). Lastly, KDE module estimates the probability distribution of extracted features and enables $\tdrn$ to fully utilize the information in the shape of the distribution rather than looking at point estimates of distribution obtained by other types of pooling layers~\citep{ramon2000multi,zhou2002neural,wang2018revisiting} (Appendix~\ref{app_sec:averaging_layer}).

\subsubsection{Properties of Unique Class Count Model} \label{subsec:ucc_properties}
This section mathematically proves that the $UCC$ model guarantees, in principle, to achieve the stated intermediate objectives in Section~\ref{subsec:novel_mil}. Proof of propositions are given in Appendix~\ref{app_sec:proofs}.

\begin{proposition}
	\label{prop:dis_dis}
	Let $\sigma_\zeta$, $\sigma_\xi$ be disjoint subsets of $\calX$ with predicted unique class counts $\teta_{\sigma_\zeta} = \teta_{\sigma_\xi} = 1$. If the predicted unique class count of $\sigma_\nu = \sigma_\zeta \cup \sigma_\xi$ is $\teta_{\sigma_\nu} = 2$, then $h_{\sigma_\zeta} \neq h_{\sigma_\xi}$.
\end{proposition}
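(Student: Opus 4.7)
The plan is to argue by contradiction, exploiting the decomposability of the KDE layer: specifically, the fact that the KDE of a disjoint union of bags is a convex combination of the KDEs of the parts. This is the only nontrivial structural property in play, and it essentially forces the conclusion once the supposition $h_{\sigma_\zeta} = h_{\sigma_\xi}$ is made.

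First I would suppose, for contradiction, that $h_{\sigma_\zeta} = h_{\sigma_\xi}$. Next, I would write out what the KDE module produces for the union $\sigma_\nu = \sigma_\zeta \cup \sigma_\xi$, component by component. Using the Gaussian kernel $K$, the $j$-th feature density for any bag $\sigma$ is $h^j_{\sigma}(v) = \tfrac{1}{|\sigma|}\sum_{x_i \in \sigma} K\!\left(v - f^{j,i}_{\sigma}\right)$. Since $\sigma_\zeta$ and $\sigma_\xi$ are disjoint, splitting the sum over $\sigma_\nu$ yields
\begin{equation*}
h^j_{\sigma_\nu}(v) \;=\; \frac{|\sigma_\zeta|}{|\sigma_\zeta|+|\sigma_\xi|}\, h^j_{\sigma_\zeta}(v) \;+\; \frac{|\sigma_\xi|}{|\sigma_\zeta|+|\sigma_\xi|}\, h^j_{\sigma_\xi}(v),
\end{equation*}
i.e.\ $h_{\sigma_\nu}$ is a convex combination of $h_{\sigma_\zeta}$ and $h_{\sigma_\xi}$. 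Under the assumption $h_{\sigma_\zeta} = h_{\sigma_\xi}$, this convex combination collapses to the common value, so $h_{\sigma_\nu} = h_{\sigma_\zeta} = h_{\sigma_\xi}$.

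Then I would apply the distribution regression module $\tdrn$ to both sides. Since $\tdrn$ is a deterministic function of its input, $h_{\sigma_\nu} = h_{\sigma_\zeta}$ implies $\teta_{\sigma_\nu} = \tdrn(h_{\sigma_\nu}) = \tdrn(h_{\sigma_\zeta}) = \teta_{\sigma_\zeta} = 1$. This contradicts the hypothesis $\teta_{\sigma_\nu} = 2$, and hence the supposition $h_{\sigma_\zeta} = h_{\sigma_\xi}$ must fail.

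The main obstacle here is essentially notational rather than conceptual: I must be careful that the decomposition identity for KDE on $\sigma_\nu$ is used componentwise (for every feature index $j$ and every evaluation point $v$), so that the equality $h_{\sigma_\zeta} = h_{\sigma_\xi}$ as full feature-distribution tuples really does imply $h_{\sigma_\nu} = h_{\sigma_\zeta}$ as full tuples. Assuming the decomposability fact stated in Appendix~\ref{app_sec:proofs} and the determinism of $\tdrn$, no further machinery is needed, and the proof is a short contradiction.
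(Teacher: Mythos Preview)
Your proof is correct and is essentially the same as the paper's: the paper factors this through an auxiliary Proposition~B.\ref{prop:equal_counts} (if $h_{\sigma_\zeta}=h_{\sigma_\xi}$ then $\teta_{\sigma_\nu}=\teta_{\sigma_\zeta}=\teta_{\sigma_\xi}$, proved via the KDE decomposability identity and determinism of $\tdrn$) and then observes that Proposition~\ref{prop:dis_dis} is its contrapositive, which is exactly your contradiction argument unwound.
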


\begin{definition}
	\label{def:perfect_ucc_classifier}
	A perfect unique class count classifier takes in any set $\sigma$ and output the correct predicted unique class count $\teta_\sigma = \eta_\sigma$.
\end{definition}

\begin{proposition}
\label{prop:unsupervised1}
	Given a perfect unique class count classifier. The dataset $\calX$ can be perfectly clustered into $K$ subsets $\sigma^{pure}_\xi, \xi=1,2,\cdots, K$, such that $\calX=\bigcup_{\xi=1}^{K} \sigma^{pure}_\xi$ and $\sigma^{pure}_\xi = \{ x_i | x_i\in\calX, \calL(x_i) = \xi\}$.
\end{proposition}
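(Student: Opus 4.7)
The plan is to derive the partition directly from queries to the perfect $ucc$ classifier, and then use Proposition~\ref{prop:dis_dis} to justify that the same partition can equivalently be read off from the KDE-based feature distributions, which is the mechanism used by the unsupervised branch in Figure~\ref{fig:ucc_model}(b). First I would observe that for any singleton $\{x_i\}$ we have $\eta_{\{x_i\}}=1$, so singletons are pure sets, and that for any pair $\{x_i,x_j\}\subset\calX$ the true count is $\eta_{\{x_i,x_j\}}=1$ exactly when $\calL(x_i)=\calL(x_j)$ and $\eta_{\{x_i,x_j\}}=2$ otherwise. By Definition~\ref{def:perfect_ucc_classifier}, the perfect classifier transfers these identities to $\teta$, so the output of the classifier on every pair faithfully reports whether or not the two instances share the underlying class.

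Next I would define the binary relation $x_i \sim x_j \iff \teta_{\{x_i,x_j\}}=1$ on $\calX$. Reflexivity and symmetry are immediate (the bag $\{x_i,x_j\}$ is unordered, and any set of one element has $ucc=1$), and transitivity follows because, by the previous step, $\sim$ coincides with equality of the underlying label $\calL$. Hence $\sim$ is an equivalence relation, and the quotient $\calX/\!\sim$ consists of the fibers $\{x_i : \calL(x_i)=\xi\}$. Under the standing assumption that every class $k\in\{1,\dots,K\}$ is represented in $\calX$, these fibers are nonempty for each $\xi$ and therefore give exactly $K$ equivalence classes, which we take as $\sigma^{pure}_\xi$. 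The disjoint union relation $\calX=\bigcup_{\xi=1}^{K}\sigma^{pure}_\xi$ then follows immediately.

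Finally, to connect this combinatorial construction with the actual clustering branch of the $UCC$ model, I would invoke Proposition~\ref{prop:dis_dis}: any two pure sets belonging to different underlying classes can be detected by $\teta=2$ on their union, which in turn forces their KDE feature distributions to differ. Hence clustering on extracted features (or on singleton KDE profiles) recovers the same $K$ classes as $\sim$. The main obstacle I expect is not the logic of the argument, which is short once the perfectness assumption is in hand, but rather making this last reduction fully rigorous: Proposition~\ref{prop:dis_dis} only yields pairwise distinctness of distributions, so some care is needed to conclude that the $K$ singleton-induced distributions form exactly $K$ well-separated clusters in feature/distribution space, and that no pure class is ``split'' across different KDE profiles.
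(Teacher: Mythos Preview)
Your first two paragraphs are correct and constitute a complete proof of the proposition. The paper's own argument is essentially the same idea but phrased as an algorithm rather than an equivalence relation: it starts from all singletons (each with $ucc=1$), then repeatedly merges any two current subsets whose union still has $ucc=1$, and stops when no further merges are possible. Your equivalence-relation formulation and the paper's greedy merging both rest on the single observation that, under a perfect classifier, $\teta_{\{x_i,x_j\}}=1$ iff $\calL(x_i)=\calL(x_j)$; you simply package this more cleanly, and your version makes reflexivity, symmetry, and transitivity explicit where the paper leaves termination and correctness of the merging process implicit.

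Your third paragraph is unnecessary for this proposition and is where you yourself identify a gap. The statement of Proposition~\ref{prop:unsupervised1} is purely combinatorial: it asserts that querying the perfect $ucc$ classifier suffices to recover the partition of $\calX$ by $\calL$. It says nothing about KDE distributions or the unsupervised branch. The distributional consequence you are reaching for is exactly the content of the separate Proposition~\ref{prop:unique_dist}, which the paper states and proves afterward using Proposition~\ref{prop:dis_dis}. So you should stop after your second paragraph; the worry about whether pairwise distinctness of $h_{\sigma^{pure}_\xi}$ yields well-separated clusters in feature space is real, but it is not part of what this proposition claims.
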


\begin{proposition}
\label{prop:unique_dist}
	Given a perfect unique class count classifier. Decompose the dataset $\calX$ into $K$ subsets $\sigma^{pure}_\xi, \xi=1,\cdots K$, such that $\sigma^{pure}_\xi = \{ x_i | x_i\in\calX, \calL(x_i) = \xi\}$. Then, $h_{\sigma^{pure}_\xi} \neq h_{\sigma^{pure}_\zeta}$ for $\xi \neq \zeta$.
\end{proposition}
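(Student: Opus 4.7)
The plan is to reduce Proposition~\ref{prop:unique_dist} directly to Proposition~\ref{prop:dis_dis}, using the perfect-classifier hypothesis to supply the predicted unique class counts required by the latter. Fix any $\xi \neq \zeta$ in $\{1,\dots,K\}$ and consider the two pure sets $\sigma^{pure}_\xi$ and $\sigma^{pure}_\zeta$ obtained from the decomposition in the statement. Because these sets are defined by distinct underlying labels, they are automatically disjoint; moreover each contains only instances of a single underlying class, so by Definition~\ref{def:ucc} their true unique class counts are both exactly $1$.

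Next I would invoke the perfect-classifier assumption (Definition~\ref{def:perfect_ucc_classifier}) to turn the true counts into the predicted counts that Proposition~\ref{prop:dis_dis} needs. Since the classifier is perfect, $\teta_{\sigma^{pure}_\xi} = \eta_{\sigma^{pure}_\xi} = 1$ and $\teta_{\sigma^{pure}_\zeta} = \eta_{\sigma^{pure}_\zeta} = 1$. Now form the union $\sigma_\nu = \sigma^{pure}_\xi \cup \sigma^{pure}_\zeta$; by construction its instances come from exactly the two classes $\xi$ and $\zeta$, so $\eta_{\sigma_\nu} = 2$, and again by perfection $\teta_{\sigma_\nu} = 2$. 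This is precisely the hypothesis of Proposition~\ref{prop:dis_dis}, whose conclusion immediately yields $h_{\sigma^{pure}_\xi} \neq h_{\sigma^{pure}_\zeta}$, as required.

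Because the argument is essentially a one-line application of the previous proposition, I do not anticipate a real technical obstacle; the only thing to be careful about is bookkeeping. Specifically, I want to make sure that (i) the decomposition actually produces disjoint sets (which follows from $\calL$ being a function), (ii) each $\sigma^{pure}_\xi$ is non-empty, so that $\eta_{\sigma^{pure}_\xi}$ is well defined (this is guaranteed by the standing assumption, made in the Objective paragraph, that every class is realized by at least one instance of $\calX$), and (iii) the $\teta$-values used in invoking Proposition~\ref{prop:dis_dis} are truly the predictions of the same perfect classifier, not the true labels. Once these minor checks are in place the conclusion is immediate, and the proposition is established for arbitrary $\xi \neq \zeta$.
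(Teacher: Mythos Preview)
Your proposal is correct and follows essentially the same approach as the paper: reduce to Proposition~\ref{prop:dis_dis} by using the perfect-classifier hypothesis to equate predicted and true unique class counts, and then apply it to each pair $\sigma^{pure}_\xi,\sigma^{pure}_\zeta$. The paper's proof is more terse (``pair up all combinations \dots note that $\eta=\teta$''), while you spell out the disjointness and non-emptiness checks explicitly, but the argument is the same.
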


Suppose we have a perfect $ucc$ classifier. For any two pure sets $\sigma^{pure}_\zeta$ and $\sigma^{pure}_\xi$, which consist of instances of two different underlying classes, $ucc$ labels must be predicted correctly by the perfect $ucc$ classifier. Hence, the conditions of Proposition~\ref{prop:dis_dis} are satisfied, so we have $h_{\sigma^{pure}_\zeta} \neq h_{\sigma^{pure}_\xi}$. Therefore, we can, in principle, perform an unsupervised clustering on the distributions of the sets without knowing the underlying truth classes of the instances. Hence, the perfect $ucc$ classifier enables us to achieve our intermediate objective of {\bf ``Labels on sets''}. Furthermore, given a perfect $ucc$ classifier, Proposition~\ref{prop:unsupervised1} states that by performing predictions of $ucc$ labels alone, without any knowledge of underlying truth classes for instances, one can in principle perform perfect clustering for individual instances. Hence, a perfect $ucc$ classifier enables us to achieve our intermediate objective of {\bf ``Labels on instances''}.

\subsection{Unsupervised Instance Clustering} \label{subsec:unsupervised_clustering}
In order to achieve our ultimate objective of developing an unsupervised learning model for clustering all the instances in dataset $\calX$, we add this unsupervised clustering branch into our framework. Theoreticallly, we have shown in Proposition~\ref{prop:unique_dist} that given a perfect $ucc$ classifier, distributions of pure subsets of instances coming from different underlying classes are different.

In practice, it may not be always possible (probably most of the times) to train a perfect $ucc$ classifier, so we try to approximate it. First of all, we train our $ucc$ classifier on our novel MIL task and save our trained model $({\bar{\theta}}_{\mbox{\tiny feature}},{\bar{\theta}}_{\mbox{\tiny drn}},{\bar{\theta}}_{\mbox{\tiny decoder}})$. Then, we use trained feature extractor ${\bar{\theta}}_{\mbox{\tiny feature}}$ to obtain feature matrix $f_{\calX}={\bar{\theta}}_{\mbox{\tiny feature}}(\calX)$. Finally, extracted features are clustered in an unsupervised fashion, by using simple k-means and spectral clustering methods. Figure~\ref{fig:ucc_model}(b) illustrates the unsupervised clustering process in our framework. A good feature extractor ${\bar{\theta}}_{\mbox{\tiny feature}}$ is of paramount importance in this task. Relatively poor ${\bar{\theta}}_{\mbox{\tiny feature}}$ may result in a poor unsupervised clustering performance in practice even if we have a strong ${\bar{\theta}}_{\mbox{\tiny drn}}$. To obtain a strong ${\bar{\theta}}_{\mbox{\tiny feature}}$, we employ an autoencoder branch, so as to achieve high clustering performance in our unsupervised instance clustering task. The autoencoder branch ensures that features extracted by ${\bar{\theta}}_{\mbox{\tiny feature}}$ contain semantic information for reconstruction.

\section{Experiments on MNIST and CIFAR Datasets} \label{sec:experiments}
This section analyzes the performances of our $UCC$ models and fully supervised models in terms of our eventual objective of unsupervised instance clustering on MNIST (10 clusters)~\citep{lecun1998gradient}, CIFAR10 (10 clusters) and CIFAR100 (20 clusters) datasets~\citep{krizhevsky2009learning}.

\subsection{Model Architectures and Datasets}
To analyze different characteristics of our framework, different kinds of \textit{unique class count} models were trained during our experiments: $\uccae$, $\uccaetwo$, $\ucc$ and $\ucctwo$. These \textit{unique class count} models took sets of instances as inputs and were trained on $ucc$ labels. While $\uccae$ and $\uccaetwo$ models had autoencoder branch in their architecture and they were optimized jointly over both \textit{autoencoder loss} and \textit{ucc loss}, $\ucc$ and $\ucctwo$ models did not have autoencoder branch in their architecture and they were optimized over \textit{ucc loss} only (i.e. $\alpha=1$ in Equation~\ref{eq:loss_fnc}). The aim of training \textit{unique class count} models with and without autoencoder branch was to show the effect of autoencoder branch in the robustness of clustering performance with respect to $ucc$ classification performance. $\uccae$ and $\ucc$ models were trained on bags with labels of $ucc1$ to $ucc4$. On the other hand, $\uccaetwo$ and $\ucctwo$ models were trained on bags with labels $ucc2$ to $ucc4$. Our models were trained on $ucc$ labels up to $ucc4$ instead of $ucc10$ ($ucc20$ in CIFAR100) since the performance was almost the same for both cases and training with $ucc1$ to $ucc4$ was much faster (Appendix~\ref{app_subsec:details_datasets}). Please note that for perfect clustering of instances inside the bags, it is enough to have a perfect $ucc$ classifier that can perfectly discriminate $ucc1$ and $ucc2$ bags from Proposition~\ref{prop:unsupervised1}. The aim of traininig $\uccaetwo$ and $\ucctwo$ models was to experimentally check whether these models can perform as good as $\uccae$ and $\ucc$ models even if there is no pure subsets during training. In addition to our \textit{unique class count} models, for benchmarking purposes, we also trained fully supervised models, $FullySupervised$, and unsupervised autoencoder models, $Autoencoder$. $FullySupervised$ models took individual instances as inputs and used instance level ground truths as labels during training. On the other hand, $Autoencoder$ models were trained in an unsupervised manner by optimizing $autoencoder\ loss$ (i.e. $\alpha=0$ in Equation~\ref{eq:loss_fnc}). It is important to note that all models for a dataset shared the same architecture for feature extractor module and all the modules in our models are fine tuned for optimum performance and training time as explained in Appendix~\ref{app_subsec:details_architectures}.

We trained and tested our models on MNIST, CIFAR10 and CIFAR100 datasets. We have $\calX_{mnist,tr}$, $\calX_{mnist,val}$ and $\calX_{mnist,test}$ for MNIST; $\calX_{cifar10,tr}$, $\calX_{cifar10,val}$ and $\calX_{cifar10,test}$ for CIFAR10; and $\calX_{cifar100,tr}$, $\calX_{cifar100,val}$ and $\calX_{cifar100,test}$ for CIFAR100. Note that $tr$, $val$ and $test$ subscripts stand for `training', `validation' and `test' sets, respectively. All the results presented in this paper were obtained on hold-out test sets $\calX_{mnist,test}$, $\calX_{cifar10,test}$ and $\calX_{cifar100,test}$. $FullySupervised$ models took individual instances as inputs and were trained on instance level ground truths. \textit{Unique class count} models took sets of instances as inputs, which were sampled from the power sets $2^{\calX_{mnist,tr}}$, $2^{\calX_{cifar10,tr}}$ and $2^{\calX_{cifar100,tr}}$, and were trained on $ucc$ labels (Appendix~\ref{app_subsec:details_datasets}). While all the models were trained in a supervised setup, either on $ucc$ labels or instance level ground truths, all of them were used to extract features for unsupervised clustering of individual instances.

\setlength{\tabcolsep}{3pt}
\begin{table}
\caption{Minimum inter-class JS divergence values, $ucc$ classification accuracy values and clustering accuracy values of our models (first part), baseline and state of the art unsupervised models (second part) and state of the art semi-supervised models (third part) on different test datasets. The best clustering accuracy values for each kind of models (weakly supervised (our models), unsupervised, semi-supervised) are highlighted in \textbf{bold}. (`x': not applicable, `-': missing')}
\label{table:clustering_acc}
    \begin{center}
        \begin{tabular}{|l|ccc|ccc|ccc|}
        \cline{2-10}
        \multicolumn{1}{c}{} & \multicolumn{3}{|c|}{\textbf{min. JS divergence}} & \multicolumn{3}{|c|}{\pmb{$ucc$} \textbf{acc.}} & \multicolumn{3}{|c|}{\textbf{clustering acc.}} \\
        \cline{2-10}
        \multicolumn{1}{c|}{} & mnist & cifar10 & cifar100 & mnist & cifar10 & cifar100 & mnist & cifar10 & cifar100\\
        \hline
        \hline
        $\uccae$ & 0.222 & 0.097 & 0.004 & 1.000 & 0.972 & 0.824 & \textbf{0.984} & \textbf{0.781} & \textbf{0.338} \\
        \hline
        $\uccaetwo$ & 0.251 & 0.005 & 0.002 & 1.000 & 0.936 & 0.814 & \textbf{0.984} & 0.545 & 0.278 \\
        \hline
        $\ucc$ & 0.221 & 0.127 & 0.003 & 1.000 & 0.982 & 0.855 & 0.981 & 0.774 & 0.317 \\
        \hline
        $\ucctwo$ & 0.023 & 0.002 & 0.003 & 0.996 & 0.920 & 0.837 & 0.881 & 0.521 & 0.284 \\
        \hline
        $Autoencoder$ & 0.101 & 0.004 & 0.002 & x & x & x & 0.930 & 0.241 & 0.167 \\
        \hline
        $FullySupervised$ & 0.283 & 0.065 & 0.019 & x & x & x & 0.988 & 0.833 & 0.563 \\
        \hline
        \hline
        \multicolumn{7}{|l|}{JULE~\citep{yang2016joint}} & 0.964 & 0.272 & 0.137 \\
        \hline
        \multicolumn{7}{|l|}{GMVAE~\citep{dilokthanakul2016deep}} & 0.885 & - & - \\
        \hline
        \multicolumn{7}{|l|}{DAC~\citep{chang2017deep}$^*$} & \textbf{0.978} & \textbf{0.522} & \textbf{0.238} \\
        \hline
        \multicolumn{7}{|l|}{DEC~\citep{xie2016unsupervised}$^*$} & 0.843 & 0.301 & 0.185 \\
        \hline
        \multicolumn{7}{|l|}{DEPICT~\citep{ghasedi2017deep}$^*$} & 0.965 & - & - \\
        \hline
        \multicolumn{7}{|l|}{Spectral~\citep{zelnik2005self}} & 0.696 & 0.247 & 0.136 \\
        \hline
        \multicolumn{7}{|l|}{K-means~\citep{wang2015optimized}} & 0.572 & 0.229 & 0.130 \\
        \hline
        \hline
        \multicolumn{7}{|l|}{ADGM~\citep{maaloe2016auxiliary}} & \textbf{0.990} & - & - \\
        \hline
        \multicolumn{7}{|l|}{Ladder Networks~\citep{rasmus2015semi}} & 0.989 & 0.796 & - \\
        \hline
        \multicolumn{7}{|l|}{AAE~\citep{makhzani2015adversarial}} & 0.981 & - & - \\
        \hline
        \multicolumn{7}{|l|}{CatGAN~\citep{springenberg2015unsupervised}} & 0.981 & \textbf{0.804} & - \\
        \hline
        \hline
        \multicolumn{10}{l}{$^*$ Models do not separate training and testing data, i.e. their results are not on hold-out test sets.} \\
        \end{tabular}
    \end{center}
\end{table}

\subsection{Unique Class Count Prediction}
Preceeding sections showed, in theory, that a perfect $ucc$ classifier can perform `weakly' supervised clustering perfectly. We evaluate $ucc$ prediction accuracy of our \textit{unique class count} models in accordance with our first intermediate objective that \textit{unique class count} models should predict $ucc$ labels of unseen subsets correctly. We randomly sampled subsets for each $ucc$ label from the power sets of test sets and predicted the $ucc$ labels by using trained models. Then, we calculated the $ucc$ prediction accuracies by using predicted and truth $ucc$ labels, which are summarized in Table~\ref{table:clustering_acc} (Appendix~\ref{app_subsec:ucc_conf}). We observed that as the task becomes harder (from MNIST to CIFAR100), it also becomes harder to approximate the perfect $ucc$ classifier. Moreover, $\uccae$ and $\ucc$ models, in general, have higher scores than their counterpart models of $\uccaetwo$ and $\ucctwo$, which is expected since the $ucc$ prediction task becomes easier at the absence of pure sets and models reach to early stopping condition (Appendix~\ref{app_subsec:details_architectures}) more easily. This is also supported by annother interesting, yet reasonable, observation that $\uccaetwo$ models have higher $ucc$ accuracies than $\ucctwo$ models thanks to the autoencoder branch which makes $\uccaetwo$ harder to reach to early stopping condition.

\subsection{Labels on Sets} \label{subsec:labels_on_sets}
Jensen-Shannon (JS) divergence~\citep{lin1991divergence} value between feature distributions of two pure sets consisting of instances of two different underlying classes is defined as inter-class JS divergence in this paper and used for comparison on `Labels on sets' objective of assigning labels to pure sets. Higher values of inter-class JS divergence are desired since it means that feature distributions of pure sets of underlying classes are far apart from each other. The features of all the instances in a particular class are extracted by using a trained model and feature distributions associated to that class obtained by performing kernel density estimation on these extracted features. Then, for each pair of classes, inter-class JS divergence values are calculated (Appendix~\ref{app_subsec:dist_js_div}). For a particular model, which is used in feature extraction, the minimum of these pairwise inter-class JS divergence values is used as a metric in the comparison of models. We have observed that as the task gets more challenging and the number of clusters increases, there is a drop in minimum inter-class JS divergence values, which is summarized in Table~\ref{table:clustering_acc}.

\begin{figure}
	\centering
	\begin{subfigure}[b]{0.32\linewidth}
		\includegraphics[width=\linewidth]{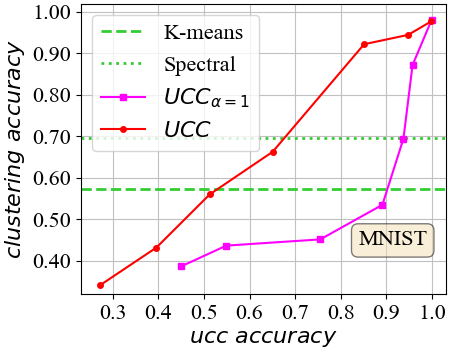}
	\end{subfigure}
	\begin{subfigure}[b]{0.32\linewidth}
		\includegraphics[width=\linewidth]{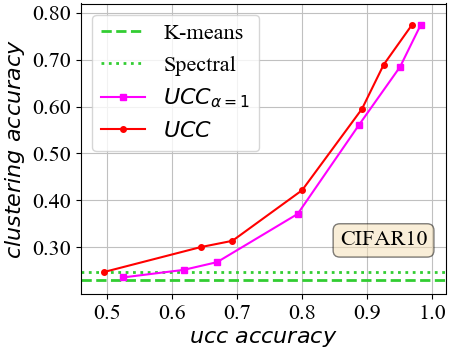}
	\end{subfigure}
	\begin{subfigure}[b]{0.32\linewidth}
		\includegraphics[width=\linewidth]{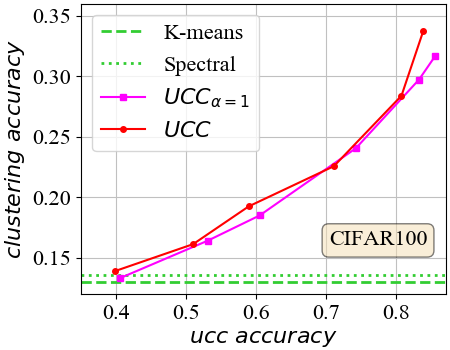}
	\end{subfigure}
	\caption{Clustering accuracy vs $ucc$ accuracy plots of $\uccae$ and $\ucc$ models together with k-means and spectral clustering accuracy baselines on MNIST, CIFAR10 and CIFAR100 datasets.}
	\label{fig:clustering_vs_ucc_acc}
\end{figure}

\subsection{Labels on Instances}
For our eventual objective of `Labels on instances', we have used `clustering accuracy' as a comparison metric, which is calculated similar to~\citet{ghasedi2017deep}. By using our trained models, we extracted features of individual instances of all classes in test sets. Then, we performed unsupervised clustering over these features by using k-means and spectral clustering. We used number of classes in ground truth as number of clusters (MNIST: 10, CIFAR10: 10, CIFAR100: 20 clusters) during clustering and gave the best clustering accuracy for each model in Table~\ref{table:clustering_acc} (Appendix~\ref{app_subsec:clustering_acc}).

In Table~\ref{table:clustering_acc}, we compare clustering accuracies of our models together with baseline and state of the art models in the literature: baseline unsupervised (K-means~\citep{wang2015optimized}, Spectral Clustering~\citep{zelnik2005self}); state of the art unsupervised (JULE~\citep{yang2016joint}, GMVAE~\citep{dilokthanakul2016deep}, DAC~\citep{chang2017deep}, DEPICT~\citep{ghasedi2017deep}, DEC~\citep{xie2016unsupervised}) and state of the art semi-supervised (AAE~\citep{makhzani2015adversarial}, CatGAN~\citep{springenberg2015unsupervised}, LN~\citep{rasmus2015semi}, ADGM~\citep{maaloe2016auxiliary}). Clustering performance of our \textit{unique class count} models is better than the performance of unsupervised models in all datasets and comparable to performance of fully supervised learning models in MNIST and CIFAR10 datasets. The performance gap gets larger in CIFAR100 dataset as the task becomes harder. Although semi-supervised methods use some part of the dataset with `exact' labels during training, our models perform on par with AAE and CatGAN models and comparable to LN and ADGM models on MNIST dataset. ADGM and LN even reach to the performance of the $FullySupervised$ model since they exploit training with `exact' labeled data. On CIFAR10 dataset, LN and CatGAN models are slightly better than our \textit{unique class count} models; however, they use 10\% of instances with `exact' labels, which is not a small portion.

In general, our $\uccae$ and $\ucc$ models have similar performance, and they are better than their counterpart models of $\uccaetwo$ and $\ucctwo$ due to the absence of pure sets during training. However, in the real world tasks, the absence of pure sets heavily depends on the nature of the problem. In our task of semantic segmentation of breast cancer metastases in histological lymph node sections, for example, there are many pure sets. Furthermore, we observed that there is a performance gap between $\uccaetwo$ and $\ucctwo$ models: $\uccaetwo$ models perform better than $\ucctwo$ models thanks to the autoencoder branch. The effect of autoencoder branch is also apparent in Figure~\ref{fig:clustering_vs_ucc_acc}, which shows clustering accuracy vs $ucc$ accuracy curves for different datasets. For MNIST dataset, while $\uccae$ model gives clustering accuracy values proportional to $ucc$ accuracy, $\ucc$ model cannot reach to high clustering accuracy values until it reaches to high $ucc$ accuracies. The reason is that autoencoder branch in $\uccae$ helps $\tfea$ module to extract better features during the initial phases of the training process, where the $ucc$ classification accuracy is low. Compared to other datasets, this effect is more significant in MNIST dataset since itself is clusterable. Although autoencoder branch helps in CIFAR10 and CIFAR100 datasets as well, improvements in clustering accuracy coming from autoencoder branch seems to be limited, so two models $\uccae$ and $\ucc$ follow nearly the same trend in the plots. The reason is that CIFAR10 and CIFAR100 datasets are more complex than MNIST dataset, so autoencoder is not powerful enough to contribute to extract discrimant features, which is also confirmed by the limited improvements of $Autoencoder$ models over baseline performance in these datasets.

\begin{figure}
    \begin{center}
        \includegraphics[width=0.90\linewidth]{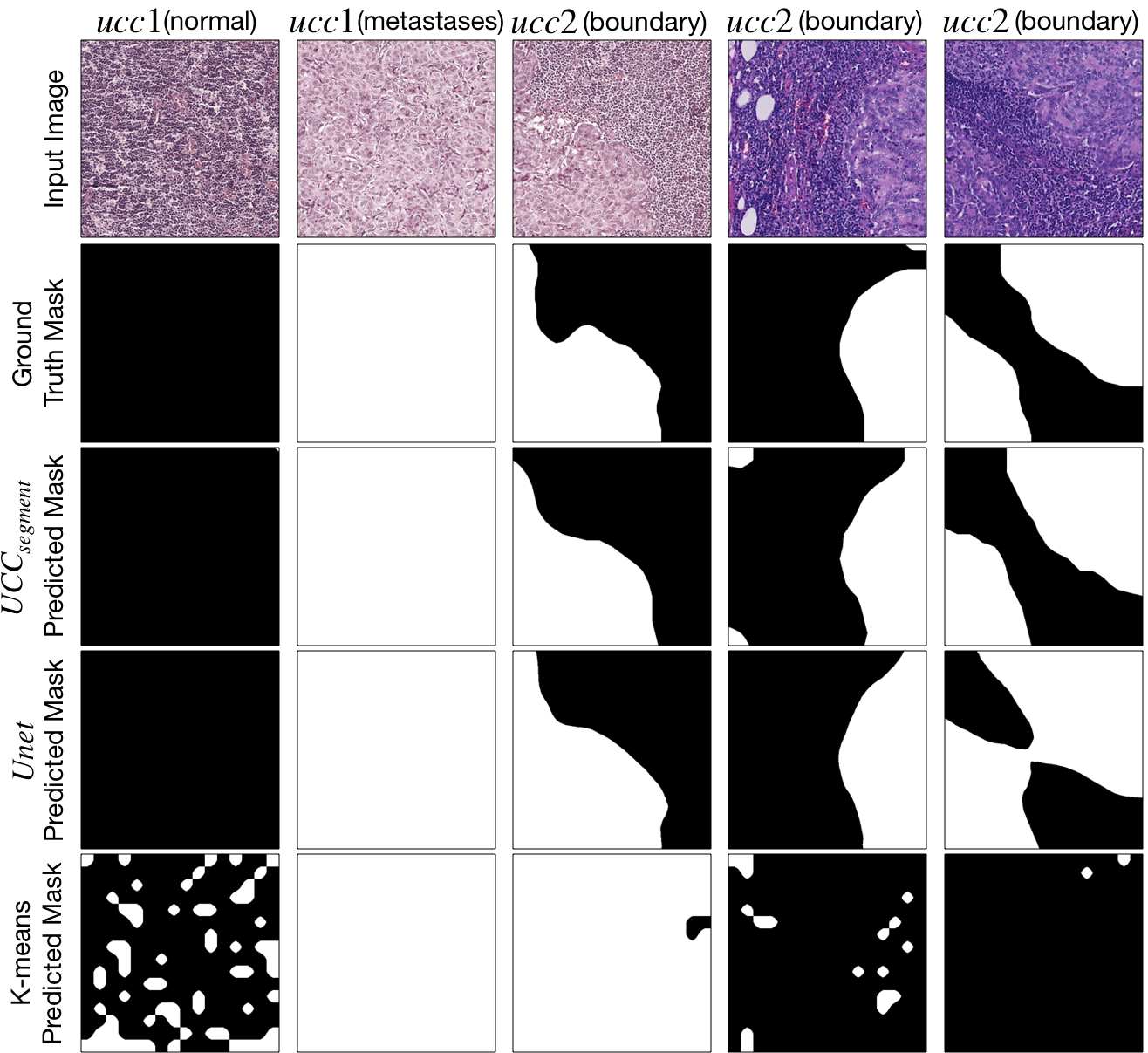}
    \end{center}
    \caption{Example images from hold-out test dataset with corresponding $ucc$ labels, ground truth masks and predicted masks by $UCC_{segment}$, $Unet$ and K-means clustering models.}
    \label{fig:segmentation_comparison}
\end{figure}

\section{Semantic Segmentation of Breast Cancer Metastases} \label{sec:application}
Semantic segmentation of breast cancer metastases in histological lymph node sections is a crucial step in staging of breast cancer, which is the major determinant of the treatment and prognosis~\citep{brierley2016tnm}. Given the images of lymph node sections, the task is to detect and locate, i.e. semantically segment out, metastases regions in the images. We have formulated this task in our novel MIL framework such that each image is treated as a bag and corresponding $ucc$ label is obtained based on whether the image is from fully normal or metastases region, which is labeled by $ucc1$, or from boundary region (i.e. image with both normal and metastases regions), which is labeled by $ucc2$. We have shown that this segmentation task can be achieved by using our weakly supervised clustering framework without knowing the ground truth metastases region masks of images, which require experts to exhaustively annotate each metastases region in each image. This annotation process is tedious, time consuming and more importantly not a part of clinical workflow.

We have used $512 \times 512$ image crops from publicly available CAMELYON dataset~\citep{litjens20181399} and constructed our bags by using $32 \times 32$ patches over these images. We trained our \textit{unique class count} model $UCC_{segment}$ on $ucc$ labels. Then, we used the trained model as a feature extractor and conducted unsupervised clustering over the patches of the images in the hold-out test dataset to obtain semantic segmentation masks. For benchmarking purposes, we have also trained a fully supervised $Unet$ model~\citep{ronneberger2015u}, which is a well-known biomedical image segmentation architecture, by using the ground truth masks and predicted the segmentation maps in the test set. The aim of this comparison was to show that at the absence of ground truth masks, our model can approximate the performance of a fully supervised model. Moreover, we have obtained semantic segmentation maps in the test dataset by using k-means clustering as a baseline study. Example images from test dataset with corresponding ground truth masks, $ucc$ labels and predicted masks by different models are shown in Figure~\ref{fig:segmentation_comparison}. (Please see Appendix~\ref{app_subsec:semantic_model_dataset} for more details.)

Furthermore, we have calculated pixel level gross statistics of TPR (True Positive Rate), FPR (False Positive Rate), TNR (True Negative Rate), FNR (False Negative Rate) and PA (Pixel Accuracy) over the images of hold-out test dataset and declared the mean values in Table~\ref{table:segmentation_gross_statistics} (Appendix~\ref{app_subsec:semantic_metrics}). When we look at the performance of unsupervised baseline method of K-means clustering, it is obvious that semantic segmentation of metastases regions in lymph node sections is not an easy task. Baseline method achieves a very low TPR value of 0.370 and almost random score of 0.512 in PA. On the other hand, both our weakly supervised model $UCC_{segment}$ and fully supervised model $Unet$ outperform the baseline method. When we compare our model $UCC_{segment}$ with $Unet$ model, we see that both models behave similarly. They have reasonably high TPR and TNR scores, and low FPR and FNR scores. Moreover, they have lower FPR values than FNR values, which is more favorable than vice-versa since pathologists opt to use immunohistochemistry (IHC) to confirm negative cases~\citep{bejnordi2017diagnostic}. However, there is a performance gap between two models, which is mainly due to the fact that $Unet$ model is a fully supervised model and it is trained on ground truth masks, which requires exhaustive annotations by experts. On the contrary, $UCC_{segment}$ model is trained on $ucc$ labels and approximates to the performance of the $Unet$ model. $ucc$ label is obtained based on whether the image is metastatic, non-metastatic or mixture, which is much cheaper and easier to obtain compared to exhaustive mask annotations. Another factor affecting the performance of $UCC_{segment}$ model is that $ucc1$ labels can sometimes be noisy. It is possible to have some small portion of normal cells in cancer regions and vice-versa due to the nature of the cancer. However, our $UCC_{segment}$ is robust to this noise and gives reasonably good results, which approximates the performance of $Unet$ model.

\begin{table}
\caption{Semantic segmentation performance statistics of $UCC_{segment}$, $Unet$ and K-means clustering methods on hold-out test dataset.}
\label{table:segmentation_gross_statistics}
    \begin{center}
        \begin{tabular}{lccccc}
         & TPR & FPR & TNR & FNR & PA \\
        \hline
        $UCC_{segment}$ (weakly supervised) & 0.818 & 0.149 & 0.851 & 0.182 & 0.863 \\
        \hline
        $Unet$ (fully supervised) & 0.860 & 0.126 & 0.874 & 0.140 & 0.889 \\
        \hline
        K-means (unsupervised baseline) & 0.370 & 0.271 & 0.729 & 0.630 & 0.512 \\
        \hline
        \end{tabular}
    \end{center}
\end{table}

\section{Conclusion} \label{sec:conclusion}
In this paper, we proposed a weakly supervised learning based clustering framework and introduce a novel MIL task as the core of this framework. We defined $ucc$ as a bag level label in MIL setup and mathematically proved that a perfect $ucc$ classifier can be used to perfectly cluster individual instances inside the bags. We designed a neural network based $ucc$ classifer and experimentally showed that clustering performance of our framework with our $ucc$ classifiers are better than the performance of unsupervised models and comparable to performance of fully supervised learning models. Finally, we showed that our weakly supervised \textit{unique class count} model, $UCC_{segment}$, can be used for semantic segmentation of breast cancer metastases in histological lymph node sections. We compared the performance of our model $UCC_{segment}$ with the performance of a $Unet$ model and showed that our weakly supervised model approximates the performance of fully supervised $Unet$ model. In the future, we want to check the performance of our $UCC_{segment}$ model with other medical image datasets and use it to discover new morphological patterns in cancer that had been overlooked in traditional pathology workflow.

\bibliography{iclr2020_conference}

\begin{thebibliography}{50}
\providecommand{\natexlab}[1]{#1}
\providecommand{\url}[1]{\texttt{#1}}
\expandafter\ifx\csname urlstyle\endcsname\relax
  \providecommand{\doi}[1]{doi: #1}\else
  \providecommand{\doi}{doi: \begingroup \urlstyle{rm}\Url}\fi

\bibitem[Andrews et~al.(2003)Andrews, Tsochantaridis, and
  Hofmann]{andrews2003support}
Stuart Andrews, Ioannis Tsochantaridis, and Thomas Hofmann.
\newblock Support vector machines for multiple-instance learning.
\newblock In \emph{Advances in neural information processing systems}, pp.\
  577--584, 2003.

\bibitem[Arteta et~al.(2014)Arteta, Lempitsky, Noble, and
  Zisserman]{arteta2014interactive}
Carlos Arteta, Victor Lempitsky, J~Alison Noble, and Andrew Zisserman.
\newblock Interactive object counting.
\newblock In \emph{European conference on computer vision}, pp.\  504--518.
  Springer, 2014.

\bibitem[Babenko et~al.(2011)Babenko, Yang, and Belongie]{babenko2011robust}
Boris Babenko, Ming-Hsuan Yang, and Serge Belongie.
\newblock Robust object tracking with online multiple instance learning.
\newblock \emph{IEEE transactions on pattern analysis and machine
  intelligence}, 33\penalty0 (8):\penalty0 1619--1632, 2011.

\bibitem[Bejnordi et~al.(2017)Bejnordi, Veta, Van~Diest, Van~Ginneken,
  Karssemeijer, Litjens, Van Der~Laak, Hermsen, Manson, Balkenhol,
  et~al.]{bejnordi2017diagnostic}
Babak~Ehteshami Bejnordi, Mitko Veta, Paul~Johannes Van~Diest, Bram
  Van~Ginneken, Nico Karssemeijer, Geert Litjens, Jeroen~AWM Van Der~Laak,
  Meyke Hermsen, Quirine~F Manson, Maschenka Balkenhol, et~al.
\newblock Diagnostic assessment of deep learning algorithms for detection of
  lymph node metastases in women with breast cancer.
\newblock \emph{Jama}, 318\penalty0 (22):\penalty0 2199--2210, 2017.

\bibitem[Brierley et~al.(2016)Brierley, Gospodarowicz, and
  Wittekind]{brierley2016tnm}
James~D Brierley, Mary~K Gospodarowicz, and Christian Wittekind.
\newblock \emph{TNM classification of malignant tumours}.
\newblock John Wiley \& Sons, 2016.

\bibitem[Brukhim \& Globerson(2018)Brukhim and Globerson]{brukhim2018predict}
Nataly Brukhim and Amir Globerson.
\newblock Predict and constrain: Modeling cardinality in deep structured
  prediction.
\newblock In \emph{International Conference on Machine Learning}, pp.\
  658--666, 2018.

\bibitem[Chang et~al.(2017)Chang, Wang, Meng, Xiang, and Pan]{chang2017deep}
Jianlong Chang, Lingfeng Wang, Gaofeng Meng, Shiming Xiang, and Chunhong Pan.
\newblock Deep adaptive image clustering.
\newblock In \emph{Proceedings of the IEEE International Conference on Computer
  Vision}, pp.\  5879--5887, 2017.

\bibitem[Chen \& Wang(2004)Chen and Wang]{chen2004image}
Yixin Chen and James~Z Wang.
\newblock Image categorization by learning and reasoning with regions.
\newblock \emph{Journal of Machine Learning Research}, 5\penalty0
  (Aug):\penalty0 913--939, 2004.

\bibitem[Chen et~al.(2006)Chen, Bi, and Wang]{chen2006miles}
Yixin Chen, Jinbo Bi, and James~Ze Wang.
\newblock Miles: Multiple-instance learning via embedded instance selection.
\newblock \emph{IEEE Transactions on Pattern Analysis and Machine
  Intelligence}, 28\penalty0 (12):\penalty0 1931--1947, 2006.

\bibitem[Dietterich et~al.(1997)Dietterich, Lathrop, and
  Lozano-P{\'e}rez]{dietterich1997solving}
Thomas~G Dietterich, Richard~H Lathrop, and Tom{\'a}s Lozano-P{\'e}rez.
\newblock Solving the multiple instance problem with axis-parallel rectangles.
\newblock \emph{Artificial intelligence}, 89\penalty0 (1-2):\penalty0 31--71,
  1997.

\bibitem[Dilokthanakul et~al.(2016)Dilokthanakul, Mediano, Garnelo, Lee,
  Salimbeni, Arulkumaran, and Shanahan]{dilokthanakul2016deep}
Nat Dilokthanakul, Pedro~AM Mediano, Marta Garnelo, Matthew~CH Lee, Hugh
  Salimbeni, Kai Arulkumaran, and Murray Shanahan.
\newblock Deep unsupervised clustering with gaussian mixture variational
  autoencoders.
\newblock \emph{arXiv preprint arXiv:1611.02648}, 2016.

\bibitem[Dundar et~al.(2007)Dundar, Krishnapuram, Rao, and
  Fung]{dundar2007multiple}
Murat Dundar, Balaji Krishnapuram, RB~Rao, and Glenn~M Fung.
\newblock Multiple instance learning for computer aided diagnosis.
\newblock In \emph{Advances in neural information processing systems}, pp.\
  425--432, 2007.

\bibitem[Felzenszwalb et~al.(2010)Felzenszwalb, Girshick, McAllester, and
  Ramanan]{felzenszwalb2010object}
Pedro~F Felzenszwalb, Ross~B Girshick, David McAllester, and Deva Ramanan.
\newblock Object detection with discriminatively trained part-based models.
\newblock \emph{IEEE transactions on pattern analysis and machine
  intelligence}, 32\penalty0 (9):\penalty0 1627--1645, 2010.

\bibitem[Foulds \& Frank(2010)Foulds and Frank]{foulds2010review}
James Foulds and Eibe Frank.
\newblock A review of multi-instance learning assumptions.
\newblock \emph{The Knowledge Engineering Review}, 25\penalty0 (1):\penalty0
  1--25, 2010.

\bibitem[Foulds(2008)]{foulds2008learning}
James~Richard Foulds.
\newblock \emph{Learning instance weights in multi-instance learning}.
\newblock PhD thesis, The University of Waikato, 2008.

\bibitem[G{\"a}rtner et~al.(2002)G{\"a}rtner, Flach, Kowalczyk, and
  Smola]{gartner2002multi}
Thomas G{\"a}rtner, Peter~A Flach, Adam Kowalczyk, and Alexander~J Smola.
\newblock Multi-instance kernels.
\newblock In \emph{ICML}, volume~2, pp.\ ~7, 2002.

\bibitem[Ghasedi~Dizaji et~al.(2017)Ghasedi~Dizaji, Herandi, Deng, Cai, and
  Huang]{ghasedi2017deep}
Kamran Ghasedi~Dizaji, Amirhossein Herandi, Cheng Deng, Weidong Cai, and Heng
  Huang.
\newblock Deep clustering via joint convolutional autoencoder embedding and
  relative entropy minimization.
\newblock In \emph{Proceedings of the IEEE International Conference on Computer
  Vision}, pp.\  5736--5745, 2017.

\bibitem[Idrees et~al.(2013)Idrees, Saleemi, Seibert, and
  Shah]{idrees2013multi}
Haroon Idrees, Imran Saleemi, Cody Seibert, and Mubarak Shah.
\newblock Multi-source multi-scale counting in extremely dense crowd images.
\newblock In \emph{Proceedings of the IEEE conference on computer vision and
  pattern recognition}, pp.\  2547--2554, 2013.

\bibitem[Jorgensen et~al.(2008)Jorgensen, Zhou, and
  Inge]{jorgensen2008multiple}
Zach Jorgensen, Yan Zhou, and Meador Inge.
\newblock A multiple instance learning strategy for combating good word attacks
  on spam filters.
\newblock \emph{Journal of Machine Learning Research}, 9\penalty0
  (Jun):\penalty0 1115--1146, 2008.

\bibitem[Kipf et~al.(2018)Kipf, Kipf, Radke, Leis, Boncz, and
  Kemper]{kipf2018learned}
Andreas Kipf, Thomas Kipf, Bernhard Radke, Viktor Leis, Peter Boncz, and Alfons
  Kemper.
\newblock Learned cardinalities: Estimating correlated joins with deep
  learning.
\newblock \emph{arXiv preprint arXiv:1809.00677}, 2018.

\bibitem[Krizhevsky \& Hinton(2009)Krizhevsky and
  Hinton]{krizhevsky2009learning}
Alex Krizhevsky and Geoffrey Hinton.
\newblock Learning multiple layers of features from tiny images.
\newblock Technical report, Citeseer, 2009.

\bibitem[LeCun et~al.(1998)LeCun, Bottou, Bengio, Haffner,
  et~al.]{lecun1998gradient}
Yann LeCun, L{\'e}on Bottou, Yoshua Bengio, Patrick Haffner, et~al.
\newblock Gradient-based learning applied to document recognition.
\newblock \emph{Proceedings of the IEEE}, 86\penalty0 (11):\penalty0
  2278--2324, 1998.

\bibitem[Lin(1991)]{lin1991divergence}
Jianhua Lin.
\newblock Divergence measures based on the shannon entropy.
\newblock \emph{IEEE Transactions on Information theory}, 37\penalty0
  (1):\penalty0 145--151, 1991.

\bibitem[Litjens et~al.(2018)Litjens, Bandi, Ehteshami~Bejnordi, Geessink,
  Balkenhol, Bult, Halilovic, Hermsen, van~de Loo, Vogels,
  et~al.]{litjens20181399}
Geert Litjens, Peter Bandi, Babak Ehteshami~Bejnordi, Oscar Geessink, Maschenka
  Balkenhol, Peter Bult, Altuna Halilovic, Meyke Hermsen, Rob van~de Loo, Rob
  Vogels, et~al.
\newblock 1399 h\&e-stained sentinel lymph node sections of breast cancer
  patients: the camelyon dataset.
\newblock \emph{GigaScience}, 7\penalty0 (6):\penalty0 giy065, 2018.

\bibitem[Liu et~al.(2015)Liu, Xu, Yu, Corvinelli, and
  Zuzarte]{liu2015cardinality}
Henry Liu, Mingbin Xu, Ziting Yu, Vincent Corvinelli, and Calisto Zuzarte.
\newblock Cardinality estimation using neural networks.
\newblock In \emph{Proceedings of the 25th Annual International Conference on
  Computer Science and Software Engineering}, pp.\  53--59. IBM Corp., 2015.

\bibitem[Maal{\o}e et~al.(2016)Maal{\o}e, S{\o}nderby, S{\o}nderby, and
  Winther]{maaloe2016auxiliary}
Lars Maal{\o}e, Casper~Kaae S{\o}nderby, S{\o}ren~Kaae S{\o}nderby, and Ole
  Winther.
\newblock Auxiliary deep generative models.
\newblock \emph{arXiv preprint arXiv:1602.05473}, 2016.

\bibitem[Makhzani et~al.(2015)Makhzani, Shlens, Jaitly, Goodfellow, and
  Frey]{makhzani2015adversarial}
Alireza Makhzani, Jonathon Shlens, Navdeep Jaitly, Ian Goodfellow, and Brendan
  Frey.
\newblock Adversarial autoencoders.
\newblock \emph{arXiv preprint arXiv:1511.05644}, 2015.

\bibitem[Parzen(1962)]{parzen1962estimation}
Emanuel Parzen.
\newblock On estimation of a probability density function and mode.
\newblock \emph{The annals of mathematical statistics}, 33\penalty0
  (3):\penalty0 1065--1076, 1962.

\bibitem[Ramon \& De~Raedt(2000)Ramon and De~Raedt]{ramon2000multi}
Jan Ramon and Luc De~Raedt.
\newblock Multi instance neural networks.
\newblock 2000.

\bibitem[Rasmus et~al.(2015)Rasmus, Berglund, Honkala, Valpola, and
  Raiko]{rasmus2015semi}
Antti Rasmus, Mathias Berglund, Mikko Honkala, Harri Valpola, and Tapani Raiko.
\newblock Semi-supervised learning with ladder networks.
\newblock In \emph{Advances in neural information processing systems}, pp.\
  3546--3554, 2015.

\bibitem[Ronneberger et~al.(2015)Ronneberger, Fischer, and
  Brox]{ronneberger2015u}
Olaf Ronneberger, Philipp Fischer, and Thomas Brox.
\newblock U-net: Convolutional networks for biomedical image segmentation.
\newblock In \emph{International Conference on Medical image computing and
  computer-assisted intervention}, pp.\  234--241. Springer, 2015.

\bibitem[Settles et~al.(2008)Settles, Craven, and Ray]{settles2008multiple}
Burr Settles, Mark Craven, and Soumya Ray.
\newblock Multiple-instance active learning.
\newblock In \emph{Advances in neural information processing systems}, pp.\
  1289--1296, 2008.

\bibitem[Springenberg(2015)]{springenberg2015unsupervised}
Jost~Tobias Springenberg.
\newblock Unsupervised and semi-supervised learning with categorical generative
  adversarial networks.
\newblock \emph{arXiv preprint arXiv:1511.06390}, 2015.

\bibitem[Tang et~al.(2010)Tang, Li, Qi, and Chua]{tang2010image}
Jinhui Tang, Haojie Li, Guo-Jun Qi, and Tat-Seng Chua.
\newblock Image annotation by graph-based inference with integrated
  multiple/single instance representations.
\newblock \emph{IEEE Transactions on Multimedia}, 12\penalty0 (2):\penalty0
  131--141, 2010.

\bibitem[Wang et~al.(2015)Wang, Wang, Song, Xu, Shen, and
  Li]{wang2015optimized}
Jianfeng Wang, Jingdong Wang, Jingkuan Song, Xin-Shun Xu, Heng~Tao Shen, and
  Shipeng Li.
\newblock Optimized cartesian k-means.
\newblock \emph{IEEE Transactions on Knowledge and Data Engineering},
  27\penalty0 (1):\penalty0 180--192, 2015.

\bibitem[Wang et~al.(2018)Wang, Yan, Tang, Bai, and Liu]{wang2018revisiting}
Xinggang Wang, Yongluan Yan, Peng Tang, Xiang Bai, and Wenyu Liu.
\newblock Revisiting multiple instance neural networks.
\newblock \emph{Pattern Recognition}, 74:\penalty0 15--24, 2018.

\bibitem[Wu et~al.(2015)Wu, Yu, Huang, and Yu]{wu2015deep}
Jiajun Wu, Yinan Yu, Chang Huang, and Kai Yu.
\newblock Deep multiple instance learning for image classification and
  auto-annotation.
\newblock In \emph{Proceedings of the IEEE Conference on Computer Vision and
  Pattern Recognition}, pp.\  3460--3469, 2015.

\bibitem[Xie et~al.(2016)Xie, Girshick, and Farhadi]{xie2016unsupervised}
Junyuan Xie, Ross Girshick, and Ali Farhadi.
\newblock Unsupervised deep embedding for clustering analysis.
\newblock In \emph{International conference on machine learning}, pp.\
  478--487, 2016.

\bibitem[Yang et~al.(2016)Yang, Parikh, and Batra]{yang2016joint}
Jianwei Yang, Devi Parikh, and Dhruv Batra.
\newblock Joint unsupervised learning of deep representations and image
  clusters.
\newblock In \emph{Proceedings of the IEEE Conference on Computer Vision and
  Pattern Recognition}, pp.\  5147--5156, 2016.

\bibitem[Zagoruyko \& Komodakis(2016)Zagoruyko and
  Komodakis]{zagoruyko2016wide}
Sergey Zagoruyko and Nikos Komodakis.
\newblock Wide residual networks.
\newblock \emph{arXiv preprint arXiv:1605.07146}, 2016.

\bibitem[Zelnik-Manor \& Perona(2005)Zelnik-Manor and Perona]{zelnik2005self}
Lihi Zelnik-Manor and Pietro Perona.
\newblock Self-tuning spectral clustering.
\newblock In \emph{Advances in neural information processing systems}, pp.\
  1601--1608, 2005.

\bibitem[Zhang et~al.(2006)Zhang, Platt, and Viola]{zhang2006multiple}
Cha Zhang, John~C Platt, and Paul~A Viola.
\newblock Multiple instance boosting for object detection.
\newblock In \emph{Advances in neural information processing systems}, pp.\
  1417--1424, 2006.

\bibitem[Zhang et~al.(2015)Zhang, Li, Wang, and Yang]{zhang2015cross}
Cong Zhang, Hongsheng Li, Xiaogang Wang, and Xiaokang Yang.
\newblock Cross-scene crowd counting via deep convolutional neural networks.
\newblock In \emph{Proceedings of the IEEE conference on computer vision and
  pattern recognition}, pp.\  833--841, 2015.

\bibitem[Zhang \& Zhou(2009)Zhang and Zhou]{zhang2009multi}
Min-Ling Zhang and Zhi-Hua Zhou.
\newblock Multi-instance clustering with applications to multi-instance
  prediction.
\newblock \emph{Applied Intelligence}, 31\penalty0 (1):\penalty0 47--68, 2009.

\bibitem[Zhang \& Goldman(2002)Zhang and Goldman]{zhang2002dd}
Qi~Zhang and Sally~A Goldman.
\newblock Em-dd: An improved multiple-instance learning technique.
\newblock In \emph{Advances in neural information processing systems}, pp.\
  1073--1080, 2002.

\bibitem[Zhang et~al.(2002)Zhang, Goldman, Yu, and Fritts]{zhang2002content}
Qi~Zhang, Sally~A Goldman, Wei Yu, and Jason~E Fritts.
\newblock Content-based image retrieval using multiple-instance learning.
\newblock In \emph{ICML}, volume~1, pp.\ ~2. Citeseer, 2002.

\bibitem[Zhang et~al.(2016)Zhang, Zhou, Chen, Gao, and Ma]{zhang2016single}
Yingying Zhang, Desen Zhou, Siqin Chen, Shenghua Gao, and Yi~Ma.
\newblock Single-image crowd counting via multi-column convolutional neural
  network.
\newblock In \emph{Proceedings of the IEEE conference on computer vision and
  pattern recognition}, pp.\  589--597, 2016.

\bibitem[Zhou(2017)]{zhou2017brief}
Zhi-Hua Zhou.
\newblock A brief introduction to weakly supervised learning.
\newblock \emph{National Science Review}, 5\penalty0 (1):\penalty0 44--53,
  2017.

\bibitem[Zhou \& Zhang(2002)Zhou and Zhang]{zhou2002neural}
Zhi-Hua Zhou and Min-Ling Zhang.
\newblock Neural networks for multi-instance learning.
\newblock In \emph{Proceedings of the International Conference on Intelligent
  Information Technology, Beijing, China}, pp.\  455--459, 2002.

\bibitem[Zhou et~al.(2009)Zhou, Sun, and Li]{zhou2009multi}
Zhi-Hua Zhou, Yu-Yin Sun, and Yu-Feng Li.
\newblock Multi-instance learning by treating instances as non-iid samples.
\newblock In \emph{Proceedings of the 26th annual international conference on
  machine learning}, pp.\  1249--1256. ACM, 2009.

\end{thebibliography}
\bibliographystyle{iclr2020_conference}

\clearpage
\appendix
\section{Kernel Density Estimation} \label{app_sec:kde}
Kernel density estimation is a statistical method to estimate underlying unknown probability distribution in data~\citep{parzen1962estimation}. It works based on fitting kernels at sample points of an unknown distribution and adding them up to construct the estimated probability distribution. Kernel density estimation process is illustrated in Figure~\ref{fig:ucc_model_detailed}.

\begin{figure}[ht]
	\begin{center}
		\includegraphics[width=0.94\linewidth]{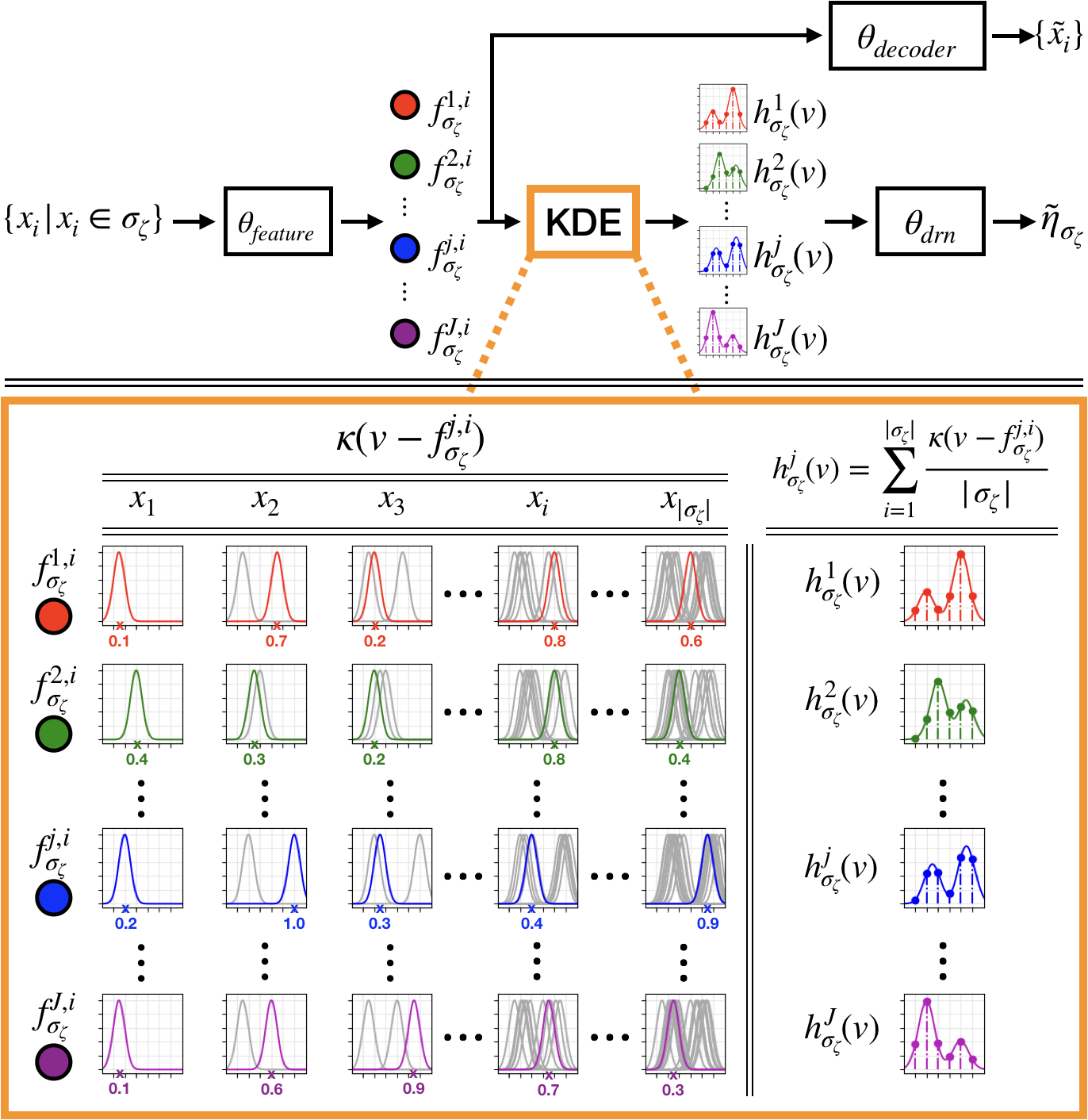}
	\end{center}
	\caption{KDE module - the Gaussian kernel ($\kappa(v - f^{j,i}_{\sigma_\zeta})$) for each extracted feature for a sample is illustrated with colored curves and previously accumulated kernels are shown in gray. Estimated feature distributions, which are obtained by employing Equation~\ref{eq:kde_sup}, are sampled at some pre-determined intervals and passed to $\tdrn$.  }
	\label{fig:ucc_model_detailed}
\end{figure}

\subsection{KDE module is differentiable}
The distribution of the feature $h^j_{\sigma_\zeta}(v)$ is obtained by applying kernel density estimation on the extracted features $f^{j,i}_{\sigma_\zeta}$ as in Equation~\ref{eq:kde_sup}. In order to be able to train our \textit{unique class count model} end-to-end, we need to show that KDE module is differentiable, so that we can pass the gradients from $\tdrn$ to $\tfea$ during back-propagation. Derivative of $h^j_{\sigma_\zeta}(v)$ with respect to input of KDE module, $f^{j,i}_{\sigma_\zeta}$, can be obtained as in Equation~\ref{eq:kde_derivative_sup}.
\begin{align}
	h^j_{\sigma_\zeta}(v) &= \frac{1}{|\sigma_\zeta|} \sum_{i=1}^{|\sigma_\zeta|}\frac{1}{\sqrt{2\pi{\sigma}^2}} e^{-\frac{1}{2{\sigma}^2} \left(v- f^{j,i}_{\sigma_\zeta}\right)^2} \label{eq:kde_sup} \\
	\frac{\partial h^j_{\sigma_\zeta}(v)}{\partial f^{j,i}_{\sigma_\zeta}} &= \frac{1}{ |\sigma_\zeta| } \frac{\left(v-f^{j,i}_{\sigma_\zeta}\right)}{ {{\sigma}^2} {\sqrt{2\pi{\sigma}^2}} } e^{-\frac{1}{2{\sigma}^2}\left(v-f^{j,i}_{\sigma_\zeta}\right)^2} \label{eq:kde_derivative_sup} 
\end{align}

After showing that KDE module is differentiable, we can show the weight update process for $\tfea$ module in our model. Feature extractor module ${\theta_{\mbox{\tiny feature}}}$ is shared by both autoencoder branch and $ucc$ branch in our model. During back-propagation phase of the end-to-end training process, the weight updates of ${\theta_{\mbox{\tiny feature}}}$ comprise the gradients coming from both branches (Equation~\ref{eq:gradients_feature}). Gradients coming from autoencoder branch follow the traditional neural network back-propagation flow through the convolutional and fully connected layers. Different than that, gradients coming from $ucc$ branch (Equation~\ref{eq:gradients_feature_ucc}) also back-propagate through the custom KDE layer according to Equation~\ref{eq:kde_derivative_sup}.

\begin{equation}
    \label{eq:loss_fnc_compact}
    Loss\ =\ \alpha \underbrace{Loss_{ucc}}_{ \substack{\text{ucc} \\ \text{loss}} }\ +\ (1-\alpha)\underbrace{Loss_{ae}}_{ \substack{\text{autoencoder} \\ \text{loss}} } \text{ where } \alpha \in [0,1]
\end{equation}

\begin{equation}
    \label{eq:gradients_feature}
    \underbrace{\frac{\partial Loss}{\partial {\theta_{\mbox{\tiny feature}}}}}_{ \substack{\text{gradients} \\ \text{for} \\ \tfea} }\ =\ \alpha \underbrace{\frac{\partial Loss_{ucc}}{\partial {\theta_{\mbox{\tiny feature}}}}}_{ \substack{\text{gradients} \\ \text{from} \\ \text{ucc\ branch}} }\ +\ (1-\alpha) \underbrace{\frac{\partial Loss_{ae}}{\partial {\theta_{\mbox{\tiny feature}}}}}_{\substack{\text{gradients} \\ \text{from} \\ \text{autoencoder\ branch}}}
\end{equation}

\begin{equation}
    \label{eq:gradients_feature_ucc}
    \frac{\partial Loss_{ucc}}{\partial {\theta_{\mbox{\tiny feature}}}}=\frac{\partial Loss_{ucc}}{\partial h_{\sigma_\zeta}} \times \underbrace{\frac{\partial h_{\sigma_\zeta}}{\partial f_{\sigma_\zeta}}}_{ \substack{\text{back-propagation} \\ \text{through} \\ \text{KDE\ layer}} } \times \frac{\partial f_{\sigma_\zeta}}{\partial {\theta_{\mbox{\tiny feature}}}}
\end{equation}

\section{Proofs of Propositions} \label{app_sec:proofs}
Before proceeding to the formal proofs, it is helpful to emphasize the decomposability property of kernel density estimation here.

For any set, $\sigma_\zeta$, one could partition it into a set of $M$ disjoint subsets $\sigma_\zeta = \sigma'_1 \cup \sigma'_2 \cup \cdots \cup \sigma'_M$ where $\sigma'_{\lambda} \cap \sigma'_{\psi} = \emptyset$ for $\lambda \neq \psi$. It is trivial to show that distribution $h^j_{\sigma_\zeta}(v)$ is simply a linear combination of distributions $h^j_{\sigma'_{\lambda}}(v)$, $\lambda = 1,2,\cdots, M$ (Equation~\ref{eq:decomposed_app}). As a direct consequence, one could decompose any set into its pure subsets. This is an important decomposition which will be used in the proofs of propositions later.

\begin{equation}
	\label{eq:decomposed_app}
	h^j_{\sigma_\zeta}(v) = \sum_{\lambda = 1}^{M} w_{\sigma'_\lambda} h^j_{\sigma'_\lambda}(v), \forall j \text{   where   } w_{\sigma'_\lambda} = \frac{|\sigma'_\lambda|}{|\sigma_\zeta|}
\end{equation}

Now, we can proceed to formally state our propositions.

\begingroup
	\def\thedefinition{\ref{def:ucc}}
	\begin{definition}
		Given a subset $\sigma_\zeta \subset \calX$, unique class count, $\eta_{\sigma_\zeta}$, is defined as the number of unique classes that all instances in the subset $\sigma_\zeta$ belong to, i.e. $\eta_{\sigma_\zeta} = |\{\calL(x_i)|x_i \in \sigma_\zeta\}|$. Recall that each instance belongs to an underlying unknown class.
	\end{definition}
	\addtocounter{definition}{-1}
\endgroup

\begingroup
	\def\thedefinition{\ref{def:pure_set}}
	\begin{definition}
		A set $\sigma$ is called a pure set if its unique class count equals one. All pure sets are denoted by the symbol $\spu$ in this paper.
	\end{definition}
	\addtocounter{definition}{-1}
\endgroup

\begin{propositionb}
  \label{prop:indep}
  For any set $\sigma_\zeta \subset \calX$, the unique class count $\eta_{\sigma_\zeta}$ of $\sigma_\zeta$ does not depend on the number of instances in $\sigma_\zeta$ belonging to a certain class.
\end{propositionb}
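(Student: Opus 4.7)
The plan is to argue directly from the set-builder definition of $\eta_{\sigma_\zeta}$. The key observation is that $\{\calL(x_i) \mid x_i \in \sigma_\zeta\}$ is a \emph{set} (not a multiset), so repeated occurrences of the same label are automatically collapsed to a single element before the cardinality is taken. Hence the claim should follow essentially by unfolding the definition; I expect the argument to be very short, and the only ``work'' is to phrase it precisely enough that the independence statement becomes manifest.

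Concretely, I would first partition $\sigma_\zeta$ by underlying class: for each $k \in \{1,\dots,K\}$, let $\sigma_\zeta^{(k)} = \{x_i \in \sigma_\zeta \mid \calL(x_i) = k\}$, so that $\sigma_\zeta = \bigcup_{k=1}^{K} \sigma_\zeta^{(k)}$ with the $\sigma_\zeta^{(k)}$ pairwise disjoint. Next I would define the ``class-support'' indicator $S(\sigma_\zeta) = \{k \in \{1,\dots,K\} \mid \sigma_\zeta^{(k)} \neq \emptyset\}$, which records only \emph{which} classes are present and discards all multiplicity information. The core step is then to show
\begin{equation*}
\{\calL(x_i) \mid x_i \in \sigma_\zeta\} \;=\; S(\sigma_\zeta),
\end{equation*}
which is immediate: $k$ lies in the left-hand set iff some $x_i \in \sigma_\zeta$ satisfies $\calL(x_i)=k$, iff $\sigma_\zeta^{(k)} \neq \emptyset$.

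Taking cardinalities gives $\eta_{\sigma_\zeta} = |S(\sigma_\zeta)|$, and since $S(\sigma_\zeta)$ depends only on the emptiness or non-emptiness of each $\sigma_\zeta^{(k)}$, not on the values $|\sigma_\zeta^{(k)}|$, the conclusion follows: replacing $\sigma_\zeta^{(k)}$ by any other nonempty subset of instances of class $k$ leaves $\eta_{\sigma_\zeta}$ unchanged. There is no real obstacle here; the subtlety, if any, is just being careful to distinguish set membership from multiset counting so the reader sees why the multiplicities $|\sigma_\zeta^{(k)}|$ drop out. This proposition is presumably stated mainly to license the later decomposition arguments (via Equation~\ref{eq:decomposed_app}) in which one reasons about pure subsets without tracking their sizes.
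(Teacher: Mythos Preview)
Your proposal is correct and follows exactly the same approach as the paper: the paper's proof simply states that the conclusion is obvious from the definition of unique class count (Definition~\ref{def:ucc}), and your argument is just a careful unpacking of that definition. The class-support reformulation you give is a faithful (if more verbose) rendering of the one-line observation the authors rely on.
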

\begin{proof}
    This conclusion is obvious from the definition of \textit{unique class count} in Definition~\ref{def:ucc}.
\end{proof}

\begin{propositionb}
\label{prop:theta_nonlinear}
    $\tdrn$ is non-linear.
\end{propositionb}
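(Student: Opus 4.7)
The plan is to argue by contradiction: assume $\tdrn$ is linear and derive an incompatibility with the very behavior a $ucc$ classifier is required to exhibit on pure sets and their unions. The key piece of structure to exploit is the decomposability property of KDE recorded in Equation~\ref{eq:decomposed_app}, which says that the feature distribution of a union of disjoint sets is a convex combination of the individual feature distributions, with weights equal to the relative cardinalities.

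Concretely, I would take two disjoint pure sets $\sigma^{pure}_\zeta, \sigma^{pure}_\xi \in \spu$ whose instances belong to two distinct underlying classes, form $\sigma_\nu = \sigma^{pure}_\zeta \cup \sigma^{pure}_\xi$ so that $\eta_{\sigma_\nu} = 2$, and write
\begin{equation*}
h_{\sigma_\nu} \;=\; w_\zeta\, h_{\sigma^{pure}_\zeta} \;+\; w_\xi\, h_{\sigma^{pure}_\xi}, \qquad w_\zeta = \tfrac{|\sigma^{pure}_\zeta|}{|\sigma_\nu|},\ w_\xi = \tfrac{|\sigma^{pure}_\xi|}{|\sigma_\nu|},\ w_\zeta+w_\xi=1,
\end{equation*}
using Equation~\ref{eq:decomposed_app} componentwise in $j$. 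A correctly behaving $\tdrn$ (in the sense of Definition~\ref{def:perfect_ucc_classifier}) must map both $h_{\sigma^{pure}_\zeta}$ and $h_{\sigma^{pure}_\xi}$ to the softmax one-hot vector $e_1$ corresponding to $ucc=1$, and $h_{\sigma_\nu}$ to the one-hot vector $e_2$ corresponding to $ucc=2$.

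If $\tdrn$ were linear, then applying it to the convex combination above would give
\begin{equation*}
\tdrn(h_{\sigma_\nu}) \;=\; w_\zeta\,\tdrn(h_{\sigma^{pure}_\zeta}) + w_\xi\,\tdrn(h_{\sigma^{pure}_\xi}) \;=\; w_\zeta\, e_1 + w_\xi\, e_1 \;=\; e_1 \;\neq\; e_2,
\end{equation*}
contradicting the required prediction on $\sigma_\nu$. Hence no linear map $\tdrn$ can realize a perfect (or even approximately correct) $ucc$ classifier, and $\tdrn$ must be non-linear.

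The only subtle point I anticipate is being explicit about what assumption on $\tdrn$ we are invoking: the argument needs $\tdrn$ to send pure-set distributions to the $ucc=1$ prediction and $h_{\sigma_\nu}$ to the $ucc=2$ prediction, which is exactly the requirement that $\tdrn$ participate in a perfect $ucc$ classifier as defined in Definition~\ref{def:perfect_ucc_classifier}. Once that is pinned down, the decomposability identity does all the work and the contradiction is immediate; no delicate analysis of the kernel or the softmax is needed beyond the observation that $e_1 \neq e_2$.
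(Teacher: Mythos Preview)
Your argument is correct and follows the same overall template as the paper: assume linearity, invoke the KDE decomposability (Equation~\ref{eq:decomposed_app}) to write $h_{\sigma_\nu}$ as a convex combination, push $\tdrn$ through, and obtain a contradiction with what a correct $ucc$ classifier must output.

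The one genuine difference is in how the contradiction is extracted. You specialize to two pure sets from distinct classes, so that both $\tdrn(h_{\sigma^{pure}_\zeta})$ and $\tdrn(h_{\sigma^{pure}_\xi})$ equal $e_1$, and then observe that any convex combination is again $e_1\neq e_2$; the contradiction is immediate and does not use Proposition~B.\ref{prop:indep} at all. The paper instead leaves $\sigma_\zeta,\sigma_\xi$ generic and appeals to Proposition~B.\ref{prop:indep}: since the true $ucc$ of $\sigma_\nu$ is invariant to the relative cardinalities $|\sigma_\zeta|,|\sigma_\xi|$, the predicted $\teta_{\sigma_\nu}$ should not depend on the weights $w_\zeta,w_\xi$, whereas the linear formula $w_\zeta\teta_{\sigma_\zeta}+w_\xi\teta_{\sigma_\xi}$ does (once $\teta_{\sigma_\zeta}\neq\teta_{\sigma_\xi}$). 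Your route is a bit more self-contained and explicit about the output vectors; the paper's route ties the argument back to the cardinality-invariance property. Either way the conclusion is the same, and your identification of Definition~\ref{def:perfect_ucc_classifier} as the implicit assumption on $\tdrn$ is exactly right.
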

\begin{proof} We give a proof by contradiction using Proposition B.\ref{prop:indep}. 
    Suppose $\tdrn$ is linear, then
    \begin{eqnarray}
        \label{eq:y3y2y1_app}
        \tdrn (h_{\sigma_\nu}) & = & \tdrn(w_\zeta h_{\sigma_\zeta}+ w_\xi h_{\sigma_\xi}) \\ \nonumber
        & = & w_\zeta \tdrn(h_{\sigma_\zeta}) + w_\xi \tdrn(h_{\sigma_\xi}) \\ \nonumber 
        & = & w_\zeta \teta_{\sigma_\zeta} + w_\xi \teta_{\sigma_\xi} = \teta_{\sigma_{\nu}} 
    \end{eqnarray}
    Hence, $\tdrn$ is linear only when Equation~\ref{eq:y3y2y1_app} holds. However, by Proposition B.\ref{prop:indep}, $(\tfea,\tdrn)$ should count correctly regardless of the proportion of the size of the sets $|\sigma_\zeta|$ and $|\sigma_\xi|$. Hence, Equation~\ref{eq:y3y2y1_app} cannot hold true and $\tdrn$ by contradiction cannot be linear.
\end{proof}

\begin{propositionb}
\label{prop:equal_counts}
    Let $\sigma_\zeta$, $\sigma_\xi$ be disjoint subsets of $\mathcal{X}$ with predicted unique class counts $\teta_{\sigma_\zeta}$ and $\teta_{\sigma_\xi}$, respectively. Let $\teta_{\sigma_\nu}$ be the predicted unique class count of $\sigma_\nu = \sigma_\zeta \cup \sigma_\xi$. If $h_{\sigma_\zeta} = h_{\sigma_\xi}$, then $\teta_{\sigma_\nu} = \teta_{\sigma_\zeta} = \teta_{\sigma_\xi}$.
\end{propositionb}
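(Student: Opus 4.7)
The plan is to exploit the decomposability property of kernel density estimation stated in Equation~\ref{eq:decomposed_app} of the appendix, together with the fact that $\teta_{\sigma} = \tdrn(h_\sigma)$ by construction of the $UCC$ model. The key observation is that because $\sigma_\zeta$ and $\sigma_\xi$ are disjoint, their union $\sigma_\nu$ admits a clean decomposition whose feature distribution is a convex combination of the two component distributions, with weights that sum to one.

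First I would write out the decomposition of $h_{\sigma_\nu}$ explicitly: applying Equation~\ref{eq:decomposed_app} with the partition $\sigma_\nu = \sigma_\zeta \cup \sigma_\xi$ yields
\begin{equation*}
    h^j_{\sigma_\nu}(v) = w_\zeta\, h^j_{\sigma_\zeta}(v) + w_\xi\, h^j_{\sigma_\xi}(v), \quad \forall j,
\end{equation*}
where $w_\zeta = |\sigma_\zeta|/|\sigma_\nu|$ and $w_\xi = |\sigma_\xi|/|\sigma_\nu|$, so that $w_\zeta + w_\xi = 1$. Next I would substitute the hypothesis $h_{\sigma_\zeta} = h_{\sigma_\xi}$ into this identity to obtain $h_{\sigma_\nu} = (w_\zeta + w_\xi)\, h_{\sigma_\zeta} = h_{\sigma_\zeta} = h_{\sigma_\xi}$.

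Finally, applying the distribution regression module $\tdrn$ to both sides and using the definitional identity $\teta_{\sigma} = \tdrn(h_\sigma)$ gives $\teta_{\sigma_\nu} = \tdrn(h_{\sigma_\nu}) = \tdrn(h_{\sigma_\zeta}) = \teta_{\sigma_\zeta}$, and by the same token $\teta_{\sigma_\nu} = \tdrn(h_{\sigma_\xi}) = \teta_{\sigma_\xi}$, yielding the desired chain of equalities.

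Honestly, there is no serious obstacle here: the proposition is essentially a one-line consequence of decomposability once the KDE identity is written down. The only care needed is to note that the weights $w_\zeta, w_\xi$ genuinely sum to one (which uses disjointness of $\sigma_\zeta$ and $\sigma_\xi$ so that $|\sigma_\nu| = |\sigma_\zeta| + |\sigma_\xi|$), and to apply $\tdrn$ as a deterministic function of the input distribution, so equal inputs produce equal outputs. No nonlinearity assumption on $\tdrn$ is required for this direction, in contrast to Proposition B.\ref{prop:theta_nonlinear}.
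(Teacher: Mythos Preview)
Your proposal is correct and follows essentially the same approach as the paper: decompose $h_{\sigma_\nu}$ as a convex combination via the KDE decomposability identity, use the hypothesis $h_{\sigma_\zeta}=h_{\sigma_\xi}$ to collapse it to $h_{\sigma_\nu}=h_{\sigma_\zeta}$, and then read off the equality of predicted counts since $\teta_\sigma=\tdrn(h_\sigma)$. Your write-up is in fact slightly more explicit than the paper's (you spell out why disjointness makes the weights sum to one and that $\tdrn$ is a deterministic function), but the argument is the same.
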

\begin{proof} 
    The distribution of set $\sigma_\nu$ can be decomposed into distribution of subsets,
    \begin{eqnarray}
        h_{\sigma_\nu} &=& w_\zeta h_{\sigma_\zeta} + w_\xi h_{\sigma_\xi} \text{ where } w_\zeta+w_\xi = 1 \\
        h_{\sigma_\zeta} &=& h_{\sigma_\xi} \implies h_{\sigma_\nu} = h_{\sigma_\zeta}
    \end{eqnarray}

    Hence, $\teta_{\sigma_\nu}=\teta_{\sigma_\zeta}=\teta_{\sigma_\xi}$.
\end{proof}

\begingroup
	\def\theproposition{\ref{prop:dis_dis}}
	\begin{proposition}
		Let $\sigma_\zeta$, $\sigma_\xi$ be disjoint subsets of $\calX$ with predicted unique class counts $\teta_{\sigma_\zeta} = \teta_{\sigma_\xi} = 1$. If the predicted unique class count of $\sigma_\nu = \sigma_\zeta \cup \sigma_\xi$ is $\teta_{\sigma_\nu} = 2$, then $h_{\sigma_\zeta} \neq h_{\sigma_\xi}$.
	\end{proposition}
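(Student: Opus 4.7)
The plan is a one-line proof by contradiction that reduces the statement directly to Proposition B.\ref{prop:equal_counts}. Suppose, towards a contradiction, that $h_{\sigma_\zeta} = h_{\sigma_\xi}$. All the hypotheses of Proposition B.\ref{prop:equal_counts} are then in force: $\sigma_\zeta$ and $\sigma_\xi$ are disjoint, their predicted unique class counts are given ($\teta_{\sigma_\zeta} = \teta_{\sigma_\xi} = 1$), and the feature distributions on the two sets are assumed equal. Applying the proposition to $\sigma_\nu = \sigma_\zeta \cup \sigma_\xi$ yields $\teta_{\sigma_\nu} = \teta_{\sigma_\zeta} = \teta_{\sigma_\xi} = 1$, which directly contradicts the hypothesis $\teta_{\sigma_\nu} = 2$. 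Hence $h_{\sigma_\zeta} \neq h_{\sigma_\xi}$.

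First I would verify the reduction in full. The decomposability identity from Appendix~\ref{app_sec:proofs},
\begin{equation*}
h_{\sigma_\nu} \;=\; w_\zeta\, h_{\sigma_\zeta} + w_\xi\, h_{\sigma_\xi}, \qquad w_\zeta + w_\xi = 1,
\end{equation*}
is what makes the assumed equality $h_{\sigma_\zeta} = h_{\sigma_\xi}$ collapse to $h_{\sigma_\nu} = h_{\sigma_\zeta} = h_{\sigma_\xi}$. Since $\tdrn$ is a deterministic function of the input distribution (this is precisely the permutation-invariance and distribution-only dependence guaranteed by the KDE module), equal input distributions must produce equal predicted $ucc$ values, i.e.\ $\teta_{\sigma_\nu} = \teta_{\sigma_\zeta} = \teta_{\sigma_\xi}$. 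This is exactly the content of Proposition B.\ref{prop:equal_counts}, so no additional machinery is needed.

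There is no real obstacle here; the only subtlety worth stating explicitly is that the argument is purely about the \emph{predicted} $\teta$ values of the classifier, not the true underlying $\eta$ values. In particular, we never need to invoke the non-linearity of $\tdrn$ (Proposition B.\ref{prop:theta_nonlinear}) or any assumption on the classifier being perfect; the contrapositive formulation makes the conclusion a structural consequence of how $\tdrn$ acts on KDE distributions. The resulting proof is therefore essentially a single application of Proposition B.\ref{prop:equal_counts} combined with the contradiction $1 \neq 2$.
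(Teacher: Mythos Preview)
Your proof is correct and follows exactly the paper's own approach: the paper's proof states in one line that the result ``follows immediately from the contra-positive of Proposition B.\ref{prop:equal_counts},'' which is precisely the reduction you carry out. Your additional remarks about the decomposability identity and the deterministic dependence of $\tdrn$ on the KDE distribution simply spell out the content of Proposition B.\ref{prop:equal_counts} itself, so nothing is missing or extraneous.
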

	\addtocounter{proposition}{-1}
\endgroup
\begin{proof}
	Proof of this proposition follows immediately from the contra-positive of Proposition B.\ref{prop:equal_counts}.
\end{proof}

\begingroup
	\def\thedefinition{\ref{def:perfect_ucc_classifier}}
	\begin{definition}
		A perfect unique class count classifier takes in any set $\sigma$ and output the correct predicted unique class count $\teta_\sigma = \eta_\sigma$.
	\end{definition}
	\addtocounter{definition}{-1}
\endgroup

\begingroup
	\def\theproposition{\ref{prop:unsupervised1}}
	\begin{proposition}
		Given a perfect unique class count classifier. The dataset $\calX$ can be perfectly clustered into $K$ subsets $\sigma^{pure}_\xi, \xi=1,2,\cdots, K$, such that $\calX=\bigcup_{\xi=1}^{K} \sigma^{pure}_\xi$ and $\sigma^{pure}_\xi = \{ x_i | x_i\in\calX, \calL(x_i) = \xi\}$.
	\end{proposition}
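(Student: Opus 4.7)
The plan is to use the perfect unique class count classifier directly on pairs of instances to recover the latent class partition, rather than attempting any iterative clustering procedure. The key observation is that a perfect classifier is, by Definition~\ref{def:perfect_ucc_classifier}, defined on \emph{every} subset of $\calX$, including two-element subsets, so we may query it at will.

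First, I would define a relation $\sim$ on $\calX$ by $x_i \sim x_j$ if and only if the classifier outputs $\teta_{\{x_i,x_j\}} = 1$. Because the classifier is perfect, $\teta_{\{x_i,x_j\}} = \eta_{\{x_i,x_j\}} = |\{\calL(x_i),\calL(x_j)\}|$, which equals $1$ precisely when $\calL(x_i) = \calL(x_j)$ and $2$ otherwise. Hence $\sim$ coincides with the relation ``same underlying class,'' and is therefore automatically reflexive, symmetric, and transitive; this gives a genuine equivalence relation on $\calX$ without having to verify those properties by running the classifier on larger sets.

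Next, I would identify the equivalence classes of $\sim$ with the sets $\sigma^{pure}_\xi = \{x_i \in \calX \mid \calL(x_i) = \xi\}$ for $\xi = 1,\ldots,K$. By the standing assumption stated in the Objective paragraph that each class $k$ has at least one representative in $\calX$, every $\sigma^{pure}_\xi$ is nonempty. The equivalence classes are pairwise disjoint and their union is $\calX$, so $\calX = \bigcup_{\xi=1}^{K}\sigma^{pure}_\xi$ with $\sigma^{pure}_\xi \cap \sigma^{pure}_\zeta = \emptyset$ for $\xi \neq \zeta$. Each $\sigma^{pure}_\xi$ has unique class count one by Definition~\ref{def:pure_set}, so the subsets produced are indeed pure.

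There is no real obstacle here once one allows oneself to query the perfect classifier on pairs: the whole argument reduces to observing that perfection on two-element sets already encodes the ground-truth labelling up to relabelling of classes. The only subtlety worth flagging is that the proposition as stated does not claim the labels $\xi$ are recovered in any canonical order; it only asserts existence of the partition, which is exactly what the equivalence-class construction yields.
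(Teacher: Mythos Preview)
Your proof is correct and shares the paper's core insight: a perfect $ucc$ classifier queried on small sets already determines same-class membership, so the partition can be recovered without any instance labels. The paper's own argument phrases this as an iterative merging procedure---start from singletons, randomly pair up current subsets, merge a pair whenever the classifier reports $ucc=1$ on their union, and repeat until no further merges are possible---whereas you go straight to the two-element queries and package the result as an equivalence relation. Your route is slightly more direct: by observing that $\teta_{\{x_i,x_j\}}=1 \iff \calL(x_i)=\calL(x_j)$, you get reflexivity, symmetry, and transitivity for free, and the equivalence classes are immediately the desired $\sigma^{pure}_\xi$. The paper's recursive merging reaches the same endpoint but leaves termination and correctness of the final partition implicit. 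Both arguments rely on exactly the same strength of the hypothesis (perfection on pairs suffices), so neither is more general; yours is just a cleaner packaging of the same idea.
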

	\addtocounter{proposition}{-1}
\endgroup
\begin{proof}
	First note that this proposition holds because the ``perfect unique class count classifier" is a very strong condition. Decompose $\calX$ into subsets with single instance and then apply the unique class count on each subset, by definition, unique class counts of all subsets are one. Randomly pair up the subsets and merge them if their union still yield unique class count of one. Recursively apply merging on this condition until no subsets can be merged.
\end{proof}

\begingroup
	\def\theproposition{\ref{prop:unique_dist}}
	\begin{proposition}
		Given a perfect unique class count classifier. Decompose the dataset $\calX$ into $K$ subsets $\sigma^{pure}_\xi, \xi=1,\cdots K$, such that $\sigma^{pure}_\xi = \{ x_i | x_i\in\calX, \calL(x_i) = \xi\}$. Then, $h_{\sigma^{pure}_\xi} \neq h_{\sigma^{pure}_\zeta}$ for $\xi \neq \zeta$.
	\end{proposition}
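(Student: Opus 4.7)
The plan is to derive this result by a short contrapositive-style argument that builds directly on Proposition~\ref{prop:dis_dis}, with the ``perfect classifier'' assumption doing almost all of the work. First I would fix any two distinct indices $\xi \neq \zeta$ in $\{1, \ldots, K\}$ and look at the corresponding pure subsets $\sigma^{pure}_\xi$ and $\sigma^{pure}_\zeta$. By construction these are disjoint, and each has underlying unique class count equal to one (each consists solely of instances of a single class); hence Definition~\ref{def:perfect_ucc_classifier} gives $\teta_{\sigma^{pure}_\xi} = \teta_{\sigma^{pure}_\zeta} = 1$.

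Next I would form the union $\sigma_\nu = \sigma^{pure}_\xi \cup \sigma^{pure}_\zeta$. Because the instances in $\sigma_\nu$ come from exactly two underlying classes, the true $ucc$ is $\eta_{\sigma_\nu} = 2$, and again by perfection of the classifier we get $\teta_{\sigma_\nu} = 2$. The hypotheses of Proposition~\ref{prop:dis_dis} are thus met — two disjoint subsets with predicted $ucc$ equal to $1$ whose union has predicted $ucc$ equal to $2$ — so I can invoke that proposition to conclude $h_{\sigma^{pure}_\xi} \neq h_{\sigma^{pure}_\zeta}$. Since $\xi$ and $\zeta$ were arbitrary distinct indices, the claim holds for every such pair.

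I do not expect any serious obstacle here: the real content lives in Proposition~\ref{prop:dis_dis}, which in turn is the contrapositive of Proposition~B.\ref{prop:equal_counts} and rests on the decomposability of KDE distributions stated in Equation~\ref{eq:decomposed_app}. The present proposition is essentially a specialization of that machinery to the canonical pure-set decomposition of $\calX$ whose existence is guaranteed by Proposition~\ref{prop:unsupervised1}. The only subtlety to flag is that the ``perfect classifier'' hypothesis must be used twice — once on each individual pure set and once on their union — since without it we cannot translate the semantic fact that two distinct underlying classes are present in $\sigma_\nu$ into the predicted value $\teta_{\sigma_\nu} = 2$ that Proposition~\ref{prop:dis_dis} requires as input.
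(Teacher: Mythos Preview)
Your proposal is correct and follows essentially the same approach as the paper: apply Proposition~\ref{prop:dis_dis} to each pair of pure subsets, using the perfect-classifier assumption to guarantee that the predicted counts match the true counts (so $\teta = 1$ on each pure set and $\teta = 2$ on their union). The paper's proof is terser but identical in substance, simply noting that Proposition~\ref{prop:dis_dis} applies to arbitrary such pairs and that $\eta = \teta$ under a perfect classifier.
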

	\addtocounter{proposition}{-1}
\endgroup
\begin{proof}
	Since in Proposition~\ref{prop:dis_dis}, the subsets are arbitrary, it holds for any two subsets with unique class count of one. By pairing up all combinations, one arrives at this proposition. Note that for a perfect unique class count classifier, $\eta = \teta$.
\end{proof}

\section{Details on Experiments with MNIST and CIFAR Datasets} \label{app_sec:details_experiment}
\subsection{Details of Model Architectures} \label{app_subsec:details_architectures}
Feature extractor module $\tfea$ has convolutional blocks similar to the wide residual blocks in~\citet{zagoruyko2016wide}. However, the parameters of architectures, number of convolutional and fully connected layers, number of filters in convolutional layers, number of nodes in fully-connected layers, number of bins and $\sigma$ value in KDE module, were decided based on models' performance and training times. While increasing number of convolutional layers or filters were not improving performance of the models substantialy, they were putting a heavy computation burden. For determining the architecture of $\tdrn$, we checked the performances of different number of fully connected layers. As the number of layers increased, the $ucc$ classification performance of the models increased. However, we want $\tfea$ to be powerful, so we stopped to increase number of layers as soon as we got good results. For KDE module, we have tried parameters of 11 bins, 21 bins, $\sigma=0.1$ and $\sigma=0.01$. Best results were obtained with 11 bins and $\sigma=0.1$. Similarly, we have tested different number of features at the output of $\tfea$ module and we decided to use 10 features for MNIST and CIFAR10 datasets and 16 features for CIFAR100 dataset based on the clustering performance and computation burden.

During training, loss value of validation sets was observed as early stopping criteria. Training of the models was stopped if the validation loss didn't drop for some certain amount of training iterations.

For the final set of hyperparameters and details of architectures, please see the code for our experiments: \url{\codelink}

\subsection{Details of Datasets} \label{app_subsec:details_datasets}
We trained and tested our models on MNIST, CIFAR10 and CIFAR100 datasets. While MNIST and CIFAR10 datasets have 10 classes, CIFAR100 dataset has 20 classes. For MNIST, we randomly splitted 10,000 images from training set as validation set, so we had 50,000, 10,000 and 10,000 images in our training $\calX_{mnist,tr}$, validation $\calX_{mnist,val}$ and test sets $\calX_{mnist,test}$, respectively. In CIFAR10 dataset, there are 50,000 and 10,000 images with equal number of instances from each class in training and testing sets, respectively. Similar to MNIST dataset, we randomly splitted 10,000 images from the training set as validation set. Hence, we had 40,000, 10,000 and 10,000 images in our training $\calX_{cifar10,tr}$, validation $\calX_{cifar10,val}$ and testing $\calX_{cifar10,test}$ sets for CIFAR10, respectively. In CIFAR100 dataset, there are 50,000 and 10,000 images with equal number of instances from each class in training and testing sets, respectively. Similar to other datasets, we randomly splitted 10,000 images from the training set as validation set. Hence, we had 40,000, 10,000 and 10,000 images in our training $\calX_{cifar100,tr}$, validation $\calX_{cifar100,val}$ and testing $\calX_{cifar100,test}$ sets for CIFAR10, respectively.

$FullySupervised$ models took individual instances as inputs and were trained on instance level ground truths. $\calX_{mnist,tr}$, $\calX_{cifar10,tr}$ and $\calX_{cifar100,tr}$ were used for training of $FullySupervised$ models. \textit{Unique class count} models took sets of instances as inputs and were trained on $ucc$ labels. Inputs to \textit{unique class count} models were sampled from the power sets of MNIST, CIFAR10 and CIFAR100 datasets, i.e. $2^{\calX_{mnist,tr}}$, $2^{\calX_{cifar10,tr}}$ and $2^{\calX_{cifar100,tr}}$. For MNIST and CIFAR10 datasets, the subsets (bags) with 32 instances and for CIFAR100 dataset, the subsets (bags) with 128 instances are used in our experiments. While $\uccae$ and $\ucc$ models are trained on $ucc1$ to $ucc4$ labels, $\uccaetwo$ and $\ucctwo$ models are trained on $ucc2$ to $ucc4$ labels.

Our models were trained on $ucc$ labels up to $ucc4$ instead of $ucc10$ ($ucc20$ in CIFAR100) since the performance was almost the same for both cases in our experiment with MNIST dataset, results of which are shown in Table~\ref{app_table:mnist_clustering_acc_comp}. On the other hand, training with $ucc1$ to $ucc4$ was much faster than $ucc1$ to $ucc10$ because as the $ucc$ label gets larger, the number of instances in a bag is required to be larger in order to represent each class and number of elements in powerset also grows exponentially. Please note that for perfect clustering of instances, it is enough to have a perfect $ucc$ classifier that can discriminate $ucc1$ and $ucc2$ from Proposition~\ref{prop:unsupervised1}.

All the results presented in this paper were obtained on hold-out test sets $\calX_{mnist,test}$, $\calX_{cifar10,test}$ and $\calX_{cifar100,test}$.

\setlength{\tabcolsep}{3pt}
\begin{table}[h]
\caption{Clustering accuracy comparison of training \textit{unique class count models} with $ucc$ labels of $ucc1$ to $ucc4$ and $ucc1$ to $ucc10$ on MNIST dataset.}
\label{app_table:mnist_clustering_acc_comp}
	\begin{center}
		\begin{tabular}{lcc}
		\hline
		& \multicolumn{2}{c}{clustering accuracy}  \\
		\hline
		& $ucc1$ to $ucc4$ &  $ucc1$ to $ucc10$\\
		\hline
		$\uccae$ & 0.984 & 0.983 \\
		\hline
		$\uccaetwo$ & 0.984 & 0.982 \\
		\hline
		\end{tabular}
	\end{center}
\end{table}

\FloatBarrier
\subsection{Confusion Matrices for $ucc$ Predictions} \label{app_subsec:ucc_conf}
We randomly sampled subsets for each $ucc$ label from the power sets of test sets and predicted the $ucc$ labels by using trained models. Then, we calculated the $ucc$ prediction accuracies by using predicted and truth $ucc$ labels, which are summarized in Table~\ref{table:clustering_acc}. Here, we show confusion matrices of our $\uccae$ and $\uccaetwo$ models on MNIST, CIFAR10 and CIFAR100 datasets as examples in Figure~\ref{fig:conf_mat_mnist},~\ref{fig:conf_mat_cifar10} and \ref{fig:conf_mat_cifar100}, respectively.

\begin{figure}[h]
	\centering
	\begin{subfigure}[b]{0.4\linewidth}
		\includegraphics[width=\linewidth]{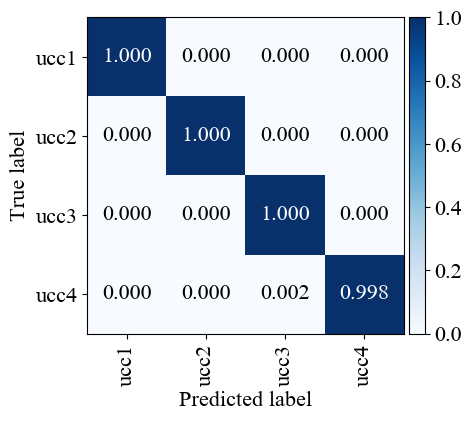}
		\caption{$\uccae$}
	\end{subfigure}
	\begin{subfigure}[b]{0.4\linewidth}
		\includegraphics[width=\linewidth]{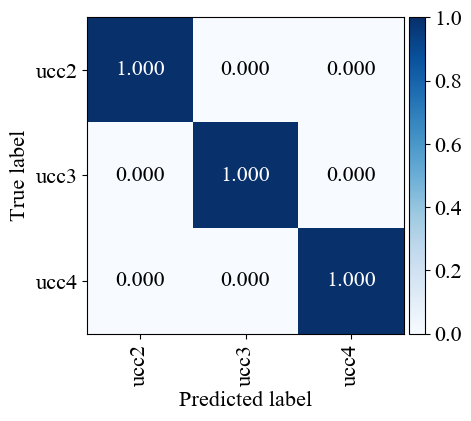}
		\caption{$\uccaetwo$}
	\end{subfigure}
	\caption{Confusion matrices of our $\uccae$ and $\uccaetwo$ models for $ucc$ prediction on MNIST.}
	\label{fig:conf_mat_mnist}
\end{figure}

\begin{figure}[h]
	\centering
	\begin{subfigure}[b]{0.4\linewidth}
		\includegraphics[width=\linewidth]{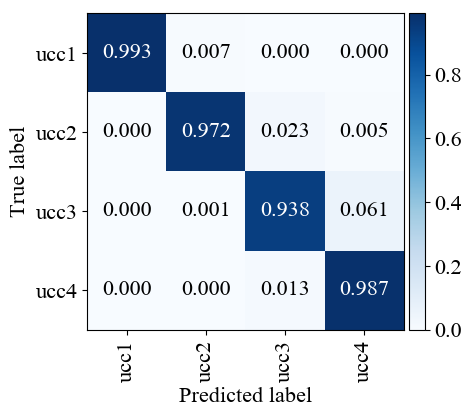}
		\caption{$\uccae$}
	\end{subfigure}
	\begin{subfigure}[b]{0.4\linewidth}
		\includegraphics[width=\linewidth]{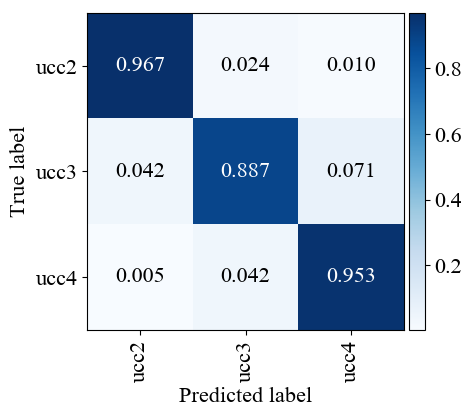}
		\caption{$\uccaetwo$}
	\end{subfigure}
	\caption{Confusion matrices of our $\uccae$ and $\uccaetwo$ models for $ucc$ prediction on CIFAR10.}
	\label{fig:conf_mat_cifar10}
\end{figure}

\begin{figure}
	\centering
	\begin{subfigure}[b]{0.4\linewidth}
		\includegraphics[width=\linewidth]{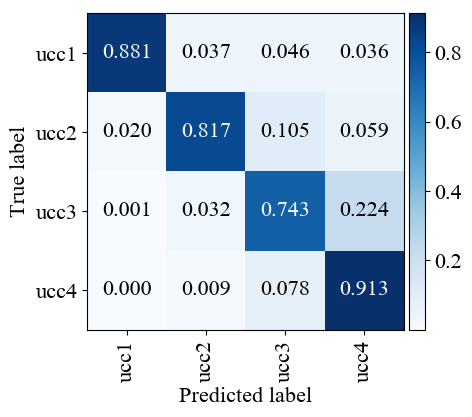}
		\caption{$\uccae$}
	\end{subfigure}
	\begin{subfigure}[b]{0.4\linewidth}
		\includegraphics[width=\linewidth]{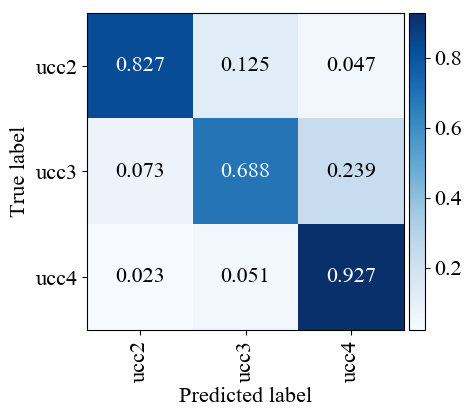}
		\caption{$\uccaetwo$}
	\end{subfigure}
	\caption{Confusion matrices of our $\uccae$ and $\uccaetwo$ models for $ucc$ prediction on CIFAR100.}
	\label{fig:conf_mat_cifar100}
\end{figure}

\FloatBarrier
\subsection{Feature Distributions and Inter-class JS Divergence Matrices} \label{app_subsec:dist_js_div}
The features of all the instances in a particular class are extracted by using a trained model and feature distributions associated to that class obtained by performing kernel density estimation on these extracted features. Then, for each pair of classes, inter-class JS divergence values are calculated. We show inter-class JS divergence matrices for our $FullySupervised$ and $\uccae$ models on MNIST test dataset in Figure~\ref{fig:mnist_JS_divergence}. We also show the underlying distributions for $FullySupervised$ and $\uccae$ models in Figure~\ref{fig:mnist_classification_distributions}~and~\ref{fig:mnist_uccae_distributions}, respectively.

\begin{figure}[h]
	\centering
	\begin{subfigure}[b]{0.48\linewidth}
		\includegraphics[width=\linewidth]{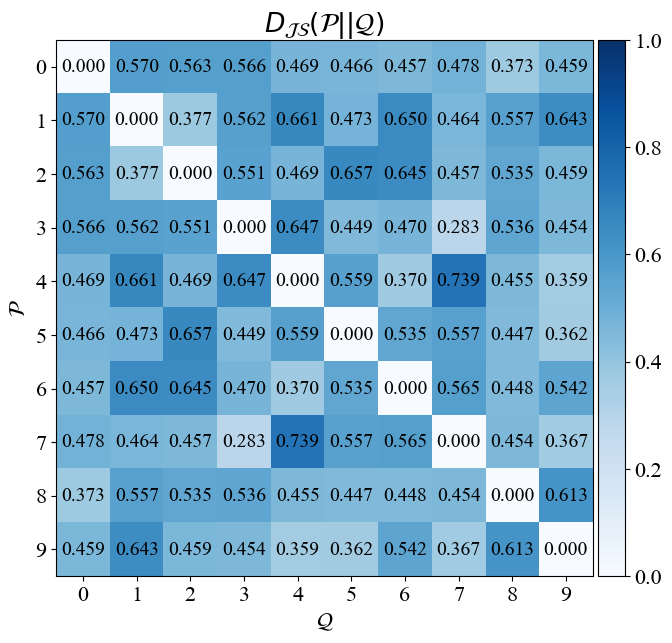}
		\caption{$FullySupervised$}
	\end{subfigure}
	\begin{subfigure}[b]{0.48\linewidth}
		\includegraphics[width=\linewidth]{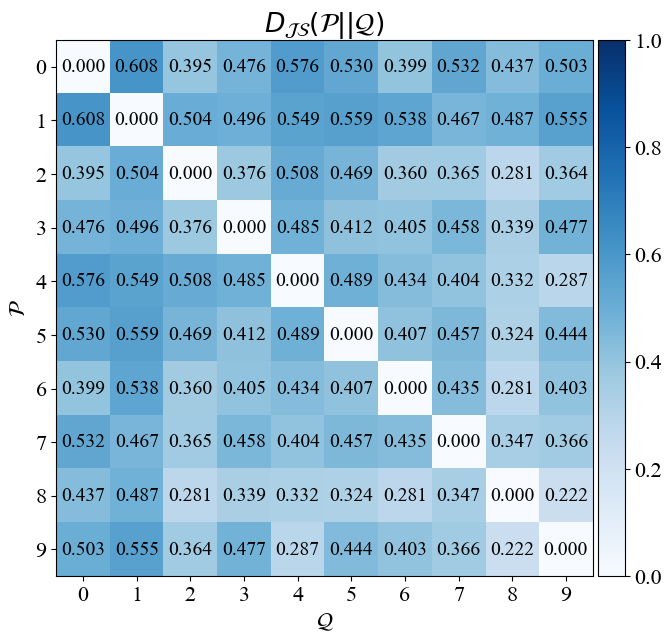}
		\caption{$\uccae$}
	\end{subfigure}
	\caption{Inter-class JS divergence matrix calculated over the distributions of features extracted by our $FullySupervised$ and $\uccae$ models on MNIST test dataset.}
	\label{fig:mnist_JS_divergence}
\end{figure}

\begin{sidewaysfigure}
	\centering
	\includegraphics[width=\linewidth]{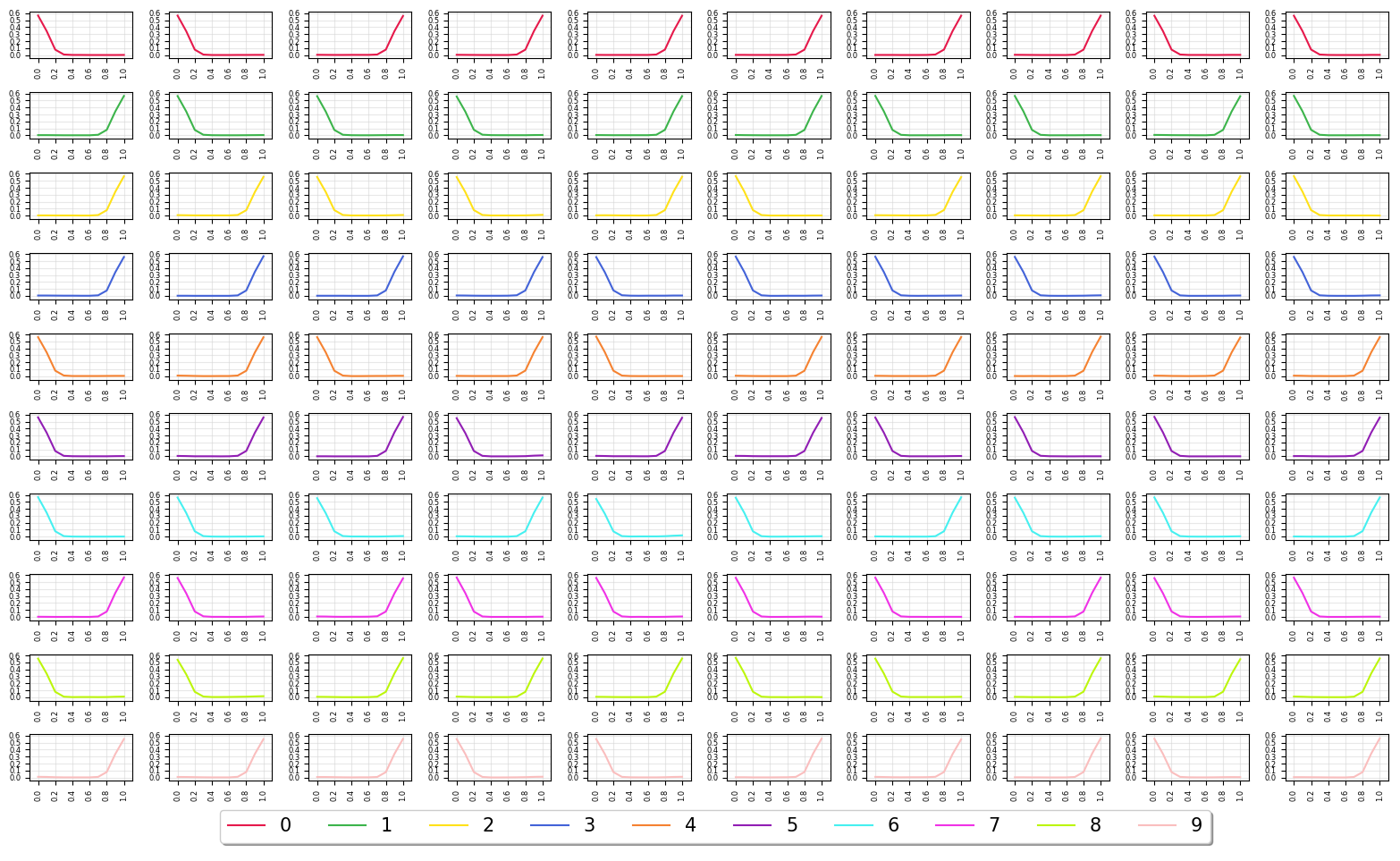}
	\caption{Distributions of extracted features by our $FullySupervised$ model on MNIST test dataset. Each column corresponds to a feature learned by model and each row corresponds to an underlying class in the test dataset.}
	\label{fig:mnist_classification_distributions}
\end{sidewaysfigure}

\begin{sidewaysfigure}
	\centering
	\includegraphics[width=\linewidth]{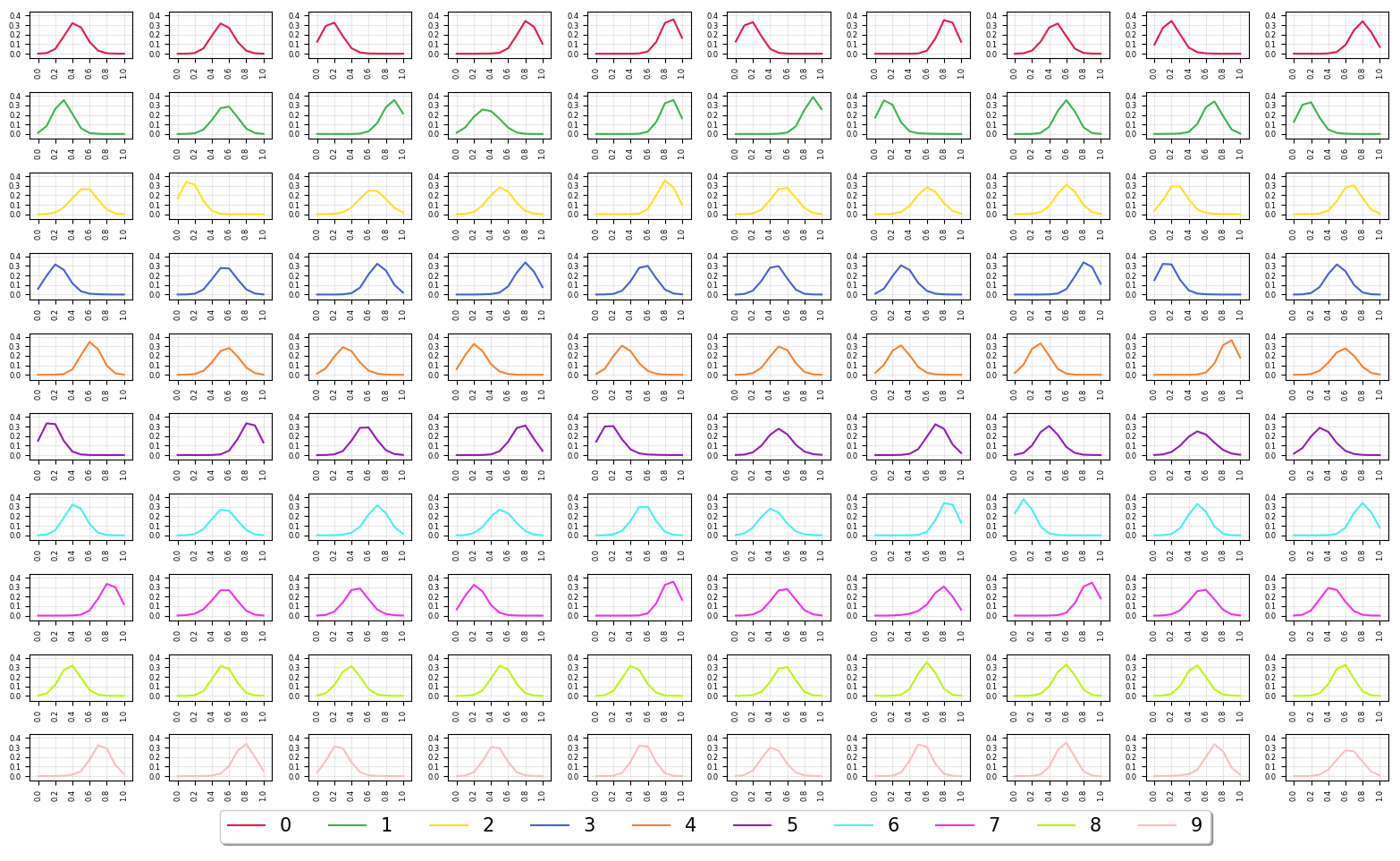}
	\caption{Distributions of extracted features by our $\uccae$ model on MNIST test dataset. Each column corresponds to a feature learned by model and each row corresponds to an underlying class in the test dataset.}
	\label{fig:mnist_uccae_distributions}
\end{sidewaysfigure}

\FloatBarrier
\clearpage
\subsection{K-means and Spectral Clustering Accuracies of Our Models} \label{app_subsec:clustering_acc}
We performed unsupervised clustering by using k-means and spectral clustering and gave the best clustering accuracy for each model on each dataset in Table~\ref{table:clustering_acc} in the main text. Here, we present all the clustering accuracies for our models in Table~\ref{table:kmeans_spectral_clustering_acc}.

\setlength{\tabcolsep}{3pt}
\begin{table}[h]
\caption{Clustering accuracy values of our models with K-means and Spectral clustering methods on different test datasets. Best value for each model in each dataset is highlighted in \textbf{bold.}}
\label{table:kmeans_spectral_clustering_acc}
	\begin{center}
		\begin{tabular}{lccccccccc}
		 && \multicolumn{2}{c}{\textbf{MNIST}} && \multicolumn{2}{c}{\textbf{CIFAR10}} && \multicolumn{2}{c}{\textbf{CIFAR100}} \\
		 \cline{3-4}\cline{6-7}\cline{9-10}
		 && \textbf{K-means} & \textbf{Spectral} && \textbf{K-means} & \textbf{Spectral} && \textbf{K-means} & \textbf{Spectral} \\
		\hline
		$\uccae$ && 0.979 & \textbf{0.984} && \textbf{0.781} & 0.680 && \textbf{0.338} & 0.261 \\
		\hline
		$\uccaetwo$ && 0.977 & \textbf{0.984} && \textbf{0.545} & 0.502 && \textbf{0.278} & 0.225 \\
		\hline
		$\ucc$ && 0.981 & \textbf{0.984} && \textbf{0.774} & 0.635 && \textbf{0.317} & 0.249 \\
		\hline
		$\ucctwo$ && \textbf{0.881} & 0.832 && \textbf{0.521} & 0.463 && \textbf{0.284} & 0.237 \\
		\hline
		$Autoencoder$ && \textbf{0.930} & 0.832 && \textbf{0.241} & 0.230 && \textbf{0.167} & 0.140 \\
		\hline
		$FullySupervised$ && \textbf{0.988} & 0.106 && \textbf{0.833} & 0.464 && \textbf{0.563} & 0.328 \\
		\hline
		\end{tabular}
	\end{center}
\end{table}

\FloatBarrier
\subsection{UCC Models with Averaging Layer and KDE Layer} \label{app_sec:averaging_layer}
KDE layer is chosen as MIL pooling layer in $UCC$ model because of its four main properties, first three of which are essential for the proper operation of proposed framework and validity of the propositions in the paper:
\begin{enumerate}
    \item KDE layer is permutation-invariant, i.e. the output of KDE layer does not depend on the permutation of its inputs, which is important for the stability of $\tdrn$ module.
    \item KDE layer is differentiable, so $UCC$ model can be trained end-to-end.
    \item KDE layer has decomposability property which enables our theoretical analysis (Appendix~\ref{app_sec:proofs}).
    \item KDE layer enables $\tdrn$ to fully utilize the information in the shape of the distribution rather than looking at point estimates of distribution.
\end{enumerate}

Averaging layer~\citep{wang2018revisiting} as an MIL pooling layer, which also has the first three properties, can be an alternative to KDE layer in $UCC$ model. We have conducted additional experiments by replacing KDE layer with 'averaging layer' and compare the clustering accuracy values of the models with averaging layer and the models with KDE layer in Table~\ref{app_table:averaging_vs_kde}.

\setlength{\tabcolsep}{3pt}
\begin{table}[h]
\caption{Clustering accuracy values of the models with averaging layer and the models with KDE layer.}
\label{app_table:averaging_vs_kde}
    \begin{center}
        \begin{tabular}{lcc}
        & \multicolumn{2}{c}{\textbf{clustering acc.}} \\
        \cline{2-3}
        & mnist & cifar10 \\
        \hline
        \hline
        $\uccae$ (KDE layer) & 0.984 & 0.781 \\
        \hline
        $\uccae$ (Averaging layer) & 0.987 & 0.638 \\
        \hline
        $\ucc$ (KDE layer) & 0.981 & 0.774 \\
        \hline
        $\ucc$ (Averaging layer) & 0.943 & 0.508 \\
        \hline
        \hline
        \end{tabular}
    \end{center}
\end{table}

\clearpage
\section{Details on Semantic Segmentation Task} \label{app_sec:semantic_segmentation}

\subsection{Details of Model and Dataset} \label{app_subsec:semantic_model_dataset}
Our model $UCC_{segment}$ has the same architecture with the $\uccae$ model in CIFAR10 dataset, but this time we have used 16 features. We have also constructed the $Unet$ model with the same blocks used in $UCC_{segment}$ model in order to ensure a fair comparison. The details of the models can be seen in our code: \url{\codelink}

We have used $512 \times 512$ image crops from publicly available CAMELYON dataset~\citep{litjens20181399}. CAMELYON dataset is a public Whole Slide Image (WSI) dataset of histological lymph node sections. It also provides the exhaustive annotations for metastases regions inside the slides which enables us to train fully supervised models for benchmarking of our weakly supervised \textit{unique class count model}.

We randomly crop $512 \times 512$ images over the WSIs of CAMELYON dataset and associate a $ucc$ label to each image based on whether it is fully metastases/normal ($ucc1$) or mixture ($ucc2$). We assigned $ucc$ labels based on provided ground truths since they are readily available. However, please note that in case no annotations provided, obtaining $ucc$ labels is much cheaper and easier compared to tedious and time consuming exhaustive metastases region annotations. We assigned $ucc1$ label to an image if the metastases region in the corresponding ground truth mask is either less than 20\% (i.e. normal) or more than 80\% (i.e metastases). On the other hand, we assigned $ucc2$ label to an image if the metastases region in the corresponding ground truth mask is more than 30\% and less than 70\% (i.e. mixture). Actually, this labeling scheme imitates the noise that would have been introduced if $ucc$ labeling had been done directly by the user instead of using ground truth masks. Beyond that, $ucc1$ labels in this task can naturally be noisy since it is possible to have some small portion of normal cells in cancer regions and vice-versa due to the nature of the cancer. In this way, we have constructed our segmentation dataset consisting of training, validation and testing sets. The images in training and validation sets are cropped randomly over the WSIs in training set of CAMELYON dataset and the images in testing set are cropped randomly over the test set of CAMELYON dataset. Then, the bags in our MIL dataset to train $UCC_{segment}$ model are constructed by using $32 \times 32$ patches over these images. Each bag contains 32 instances, where each instance is a $32 \times 32$ patch. The details of our segmentation dataset are shown in Table~\ref{table:segmentation_dataset_details}.

We have provided the segmentation dataset under ``./data/camelyon/" folder inside our code folder. If you want to use this dataset for benchmarking purposes please cite our paper (referenced later) together with the original CAMELYON dataset paper of \citet{litjens20181399}.

\begin{table}[h]
\caption{Details of our segmentation dataset: number of WSIs used to crop the images in each set, number of images in each set and corresponding label distributions in each set}
\label{table:segmentation_dataset_details}
	\begin{center}
		\begin{tabular}{lccccccccccc}
		&& \multicolumn{3}{c}{$ucc1$} & & $ucc2$ &&  &&  \\
		\cline{3-5}\cline{7-7}
		&& normal & metastases & total & & mixture && \# of images && \# of WSIs \\
		\hline
		Training && 461 & 322 & 783 & & 310 && 1093 && 159 \\
		\hline
		Validation && 278 & 245 & 523 & & 211 && 734 && 106 \\
		\hline
		Testing && 282 & 668 & 950 & & 228 && 1178 && 126 \\
		\hline
		\end{tabular}
	\end{center}
\end{table}

We have given confusion matrix for $ucc$ predictions of our $UCC_{segment}$ model in Figure~\ref{fig:camelyon_uccae_conf_mat_ucc_pred}. For $Unet$ model, we have shown loss curves of traininig and validation sets during training in Figure~\ref{fig:camelyon_unet_loss}.

\begin{figure}[h]
	\centering
	\includegraphics[width=0.5\linewidth]{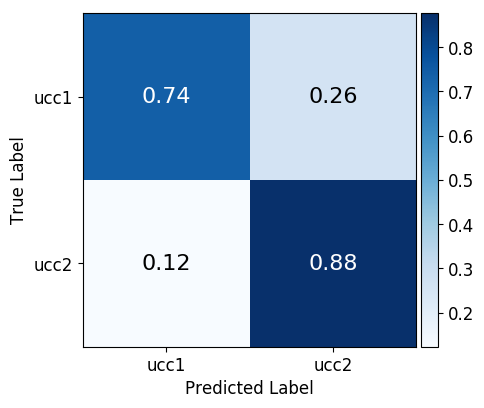}
	\caption{Confusion matrix of our $UCC_{segment}$ model for $ucc$ predictions on our segmentation dataset.}
	\label{fig:camelyon_uccae_conf_mat_ucc_pred}
\end{figure}

\begin{figure}[h]
	\centering
	\includegraphics[width=0.5\linewidth]{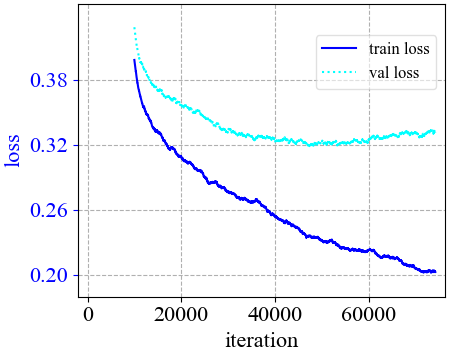}
	\caption{Training and validation loss curves during training of our $Unet$ model. We have used the best model weights, which were saved at iteration 58000, during training. Models starts to overfit after iteration 60000 and early stopping terminates the training.}
	\label{fig:camelyon_unet_loss}
\end{figure}

\FloatBarrier
\subsection{Definitions of Evealution Metrics} \label{app_subsec:semantic_metrics}
In this section, we have defined our pixel level evaluation metrics used for performance comparison of our weakly supervised $UCC_{segment}$ model, fully supervised $Unet$ model and unsupervised baseline $K-means$ model. Table~\ref{table:confusion_matrix_definition} shows the structure of pixel level confusion matrix together with basic statistical terms. Then, our pixel level evaluation metrics TPR (True Positive Rate), FPR (False Positive Rate), TNR (True Negative Rate), FNR (False Negative Rate) and PA (Pixel Accuracy) are defined in Equation~\ref{eq:tpr},~\ref{eq:fpr},~\ref{eq:tnr},~\ref{eq:fnr}~and~\ref{eq:pa}, respectively.

\begin{table}[h]
\caption{Structure of pixel level confusion matrix together with basic statistical terms}
\label{table:confusion_matrix_definition}
	\begin{center}
		\begin{tabular}{|l|l|c|c|}
		\cline{3-4}
		\multicolumn{2}{c|}{} & \multicolumn{2}{|c|}{\textbf{Ground Truth}} \\
		\cline{3-4}
		\multicolumn{2}{c|}{} & \textbf{Positive (P)} & \textbf{Negative (N)} \\
		\hline
		\multirow{2}{*}{\textbf{Predicted}} & \textbf{Positive (P)} & True Positive (TP) & False Positive (FP) \\
		\cline{2-4}
		& \textbf{Negative (N)} & False Negative (FN) & True Negative (TN) \\
		\hline
		\end{tabular}
	\end{center}
\end{table}

\begin{equation}
	\label{eq:tpr}
	TPR = \frac{TP}{TP + FN}
\end{equation}

\begin{equation}
	\label{eq:fpr}
	FPR = \frac{FP}{FP + TN}
\end{equation}

\begin{equation}
	\label{eq:tnr}
	TNR = \frac{TN}{TN + FP}
\end{equation}

\begin{equation}
	\label{eq:fnr}
	FNR = \frac{FN}{FN + TP}
\end{equation}

\begin{equation}
	\label{eq:pa}
	PA = \frac{TP + TN}{TP + FP + TN + FN}
\end{equation}

\end{document}